\def\calF{\mathcal{F}}
\def\calG{\mathcal{G}}
\def\R{\mathbb{R}}
\def\E{\mathbb{E}}
\def\P{\mathbb{P}}
\def\calX{\mathcal{X}}
\def\calY{\mathcal{Y}}
\def\calH{\mathcal{H}}
\def\calO{\mathcal{O}}
\def\calZ{\mathcal{Z}}
\DeclareMathOperator{\argmin}{argmin}
\DeclareMathOperator{\HT}{P}
\DeclareMathOperator{\sign}{sign}
\DeclareMathOperator{\err}{err}
\DeclareMathOperator{\polylog}{polylog}
\DeclareMathOperator{\poly}{poly}
\def\cf{C_3}
\def\ct{C_5}
\def\ce{C_4}
\def\cs{C_6}
\def\cse{C_7}
\def\cfo{C_8}
\def\cn{C_9}
\title[Efficient active learning of sparse halfspaces]{Efficient active learning of sparse halfspaces}
\begin{document}

\maketitle

\begin{abstract}
We study the problem of efficient PAC active learning of homogeneous linear classifiers (halfspaces) in $\R^d$, where the goal is to learn
a halfspace with low error using as few label queries as possible.
Under the extra assumption that there is a $t$-sparse halfspace that
performs well on the data ($t \ll d$),
we would like our active learning algorithm to be {\em attribute efficient}, i.e. to have label requirements sublinear in $d$.
In this paper, we provide a computationally efficient algorithm that achieves this goal.
Under certain distributional assumptions on the data, our algorithm achieves a label complexity of $O(t \cdot \polylog(d, \frac 1 \epsilon))$.
In contrast, existing algorithms in this setting are either computationally inefficient, or subject to label requirements
polynomial in $d$ or $\frac 1 \epsilon$.
\end{abstract}


\section{Introduction}



Active learning is a machine learning paradigm that aims at reducing label requirements through interacting with labeling oracles~\citep{S10}. The learner is given a distribution from which it can draw unlabeled examples,
and a labeling oracle from which it can query labels interactively. This is in contrast with passive learning, where labeled examples are drawn from distributions directly.
Using the ability to adaptively query labels, an active learning
algorithm can avoid querying the labels it has known before, thus substantially reducing label requirements.
In the PAC active learning model~\citep{V84,KSS94,BBL09,H14}, the performance of an active learner is measured by its label complexity, i.e. the number of label requests to satisfy an error requirement $\epsilon$ with high probability.


There have been many exciting works on active halfspace learning in the literature.
In this setting, the instances are in $\R^d$, and the labels are from $\{-1,+1\}$. The goal is to learn a classifier from $\calH = \{\sign(w \cdot x): w \in \R^d \}$, the class of homogeneous linear classifiers, to predict labels from instances.
Efficient active halfspace learning algorithms that work under different distributional assumptions have been proposed. Some of these algorithms are computationally efficient, and enjoy
information theoretically optimal label complexities~\citep{DKM05, BBZ07, ABL17, HKY15, ABHU15, YZ17}, that is,
$O(d \ln\frac 1 \epsilon)$ in terms of $d$ and $\epsilon$ \citep[See e.g.][for an $\Omega(d \ln\frac 1 \epsilon)$ lower bound]{KMT93}.
On the other hand, a line of work on attribute efficient learning \citep{B90} shows that one can in fact learn faster
when the target classifier is {\em sparse}, i.e. it depends only on a few of the input features.
In the problem of active halfspace learning, one can straightforwardly apply existing results to achieve attribute efficiency.
For instance, consider running the algorithm of ~\cite{ZC14} with concept class $\calH_t$, the set of $t$-sparse linear classifiers. Under certain distributional assumptions, \cite{ZC14}'s algorithm achieves label complexities of order $O(t \ln d \ln\frac 1 \epsilon)$. However, such algorithms are computationally inefficient: they require solving empirical 0-1 loss minimization with respect to $\calH_t$, which is NP-hard even in the realizable setting~\citep{N95}.


The results above raise the following question: are there active learning algorithms that learn linear classifiers in an attribute and computationally efficient manner?
A line of work on one-bit compressed sensing~\citep{BB08}, partially answers this question. They show that when the learning algorithm is allowed to synthesize instances to query their labels (also known as the membership query model~\citep{A88}, abbrev. MQ), it is possible to approximately recover the target halfspace using a near-optimal number of $\tilde{O}(t (\ln d + \ln \frac 1 \epsilon))$ queries~\citep{HB11}.
However, when applied to active learning in the PAC model, these results have strong distributional
requirements.
For instance, the algorithm of~\cite{HB11} requires the unlabeled distribution to have a constant probability to observe elements in the discrete set $\{-1,0,+1\}^d$.

In the PAC setting, recent work of~\cite{ABHZ16} proposes attribute and computationally efficient active halfspace learning algorithms, under the assumption that the unlabeled distribution is isotropic log-concave~\citep{LV07}. In the $t$-sparse $\Omega(\epsilon)$-adversarial noise setting, where all but an $\Omega(\epsilon)$ fraction of examples agree with some $t$-sparse linear classifier (see also Definition~\ref{def:an}), their algorithm has a label complexity of $\tilde{O}(\frac{t}{\epsilon^2} )$.
In the $t$-sparse $\eta$-bounded noise setting, where each label is generated by some underlying $t$-sparse linear classifier and then flipped with probability at most a constant $\eta \in [0,\frac12)$ (see also Definition~\ref{def:bn}), their algorithm has a label complexity of $\tilde{O}((\frac{t}{\epsilon})^{O(1)} )$. Compared to those achieved by computationally inefficient algorithms (e.g. ~\cite{ZC14} discussed above), these label complexity bounds are suboptimal, in that they do not have a logarithmic dependence on $\frac 1 \epsilon$.


In this paper, we give an algorithm that combines the advantages of~\cite{ZC14} and~\cite{ABHZ16}, achieving computational efficiency and $\tilde{O}(t \polylog(d, \frac 1 \epsilon))$ label complexity simultaneously, under certain distributional assumptions on the data.
Specifically, our algorithm works if the unlabeled distribution is isotropic log-concave, and has the following guarantee.
If one of the two conditions below is true:
\begin{enumerate}
\item the $t$-sparse $\mu_1\epsilon$-adversarial noise condition holds (see Definition~\ref{def:an}), where $\mu_1 > 0$ is some numerial constant;

\item the $t$-sparse $\mu_2$-bounded noise condition holds (see Definition~\ref{def:bn}), where $\mu_2 > 0$ is some numerical constant,

\end{enumerate}
then, with high probability, the algorithm outputs a halfspace with excess error at most $\epsilon$, and queries at most $O(t (\ln d + \ln \frac 1 \epsilon)^3 \ln \frac 1 \epsilon )$ labels. As a corollary, if there is a $t$-sparse linear classifier that
agrees with all the labeled examples drawn from the distribution (see Definition~\ref{def:r}), the algorithm also achieves a label complexity of $O(t (\ln d + \ln \frac 1 \epsilon)^3 \ln \frac 1 \epsilon )$. In the next section, we give a detailed comparison between
our results and related results in the literature.

From a technical perspective, our algorithm combines the margin-based
framework of~\cite{BBZ07, BL13} with iterative hard thresholding~\citep{BD09, GK09}, a technique well-studied in compressed sensing~\citep{CT06,D06}. Our analysis is based on sharp uniform concentration bounds of hinge losses over linear predictors in $\ell_1$ balls in the label query regions, which is in turn built upon classical Rademacher complexity bounds for linear prediction~\citep{KST09}.

\section{Related work}

\paragraph{Attribute efficient active learning of halfspaces.}
There is a rich body of theoretical literature on active learning of general concept classes
in the PAC setting~\citep{D11, H14}. For the problem of active halfspace learning, sharp distribution-dependent label complexity results are known,
in terms of e.g. the splitting index~\citep{D05}, or the disagreement coefficient~\citep{H07}.
Direct applications of these results (without taking advantage of sparsity assumptions)
yield algorithms with label complexities at least $\Omega(d \ln \frac 1 \epsilon)$~\citep{KMT93}.
To make these algorithms attribute efficient, a natural modification is to consider concept class
$\calH_t$, the set of $t$-sparse linear classifiers.
It is well known that $\calH_t$ has VC dimension
$O(t \ln d)$. In conjunction with existing results in the active learning
literature, this observation immediately yields attribute efficient active
learning algorithms. For example, when the unlabeled distribution is isotropic log-concave,
an application of~\cite{ZC14}'s algorithm with $\calH_t$ yields a label complexity
of $O(t \ln d \ln \frac 1 \epsilon)$ in the $t$-sparse realizable setting, and gives
$O(t \ln d \cdot (\ln \frac 1 \epsilon+\frac{\nu^2}{\epsilon^2}))$ and
$O(\frac{t\ln d}{(1-2\eta)^2} \ln \frac 1 \epsilon)$
label complexities in the $t$-sparse $\nu$-adversarial noise and $t$-sparse $\eta$-bounded noise settings.\footnote{To see this, note that the $\phi(\cdot,\cdot)$ function
defined in~\cite{ZC14} with respect to $\calH_t$ can be bounded as: $\phi(r,\xi) \leq O(r \ln \frac{r}{\xi})$, as $\calH_t$ is a subset of $\calH$. Theorem 4 of \cite{ZC14} now applies.}
However, these algorithms require solving
empirical 0-1 loss minimization subject to sparsity constraints, which is computationally intractable in general~\citep{N95}.
The only attribute and computationally efficient PAC active learning algorithms we are aware of are in~\cite{ABHZ16}.  Specifically, under the $t$-sparse $\Omega(\epsilon)$-adversarial noise setting, \cite{ABHZ16} gives an efficient algorithm with label complexity $\tilde{O}(\frac{t}{\epsilon^2}\polylog(d,\frac 1 \epsilon))$. Under the $t$-sparse $\eta$-bounded noise setting, ~\cite{ABHZ16} gives an efficient algorithm with label complexity $\tilde{O}((\frac t \epsilon)^{O(\frac 1 {(1-2\eta)^2})})$.





The notion of attribute efficient learning algorithms is initially studied in the pioneering works of~\cite{L87,B90}.
\cite{L87} considers attribute efficient online
learning of linear classifiers, with an application to learning disjunctions that depends on only $t$ attributes.
The algorithm incurs a mistake bound of $O(t \ln d)$, which can be of substantially lower order than $O(d)$ when $t$ is small.
\cite{B90} considers an online learning model where the feature space is infinite dimensional,
and each instance shown has a bounded number of nonzero attributes.
It gives efficient algorithms that learn $k$-CNFs and disjunctions
with finite mistake bounds in this setting.
\cite{S00, KS06, STT12} study attribute efficient learning of decision lists and analyzes the
tradeoff between running time and mistake bound.
\cite{LS07} shows that, if the unlabeled distribution is unconcentrated over $\{-1,1\}^d$, then there
is an algorithm that learns $t$-sparse linear classifiers with a sample complexity of $\poly(t, \ln d, 2^{O(\epsilon^{-2})})$. \cite{F07} gives algorithms for attribute efficient learning parity and DNFs
in the membership query model.


\paragraph{One-bit compressed sensing.} The line of work on one-bit compressed sensing~\citep{BB08} is closely related to our problem setup. In this setting,
there is a unknown $t$-sparse vector $u \in \R^d$, and the algorithm can make measurements of $u$ using vectors $x \in \R^d$ and receives (possibly noisy) values of $\sign(u \cdot x)$.
Note that different from standard compressed sensing~\citep{CT06,D06}, the measurement results of one-bit compressed sensing are {\em quantized} versions of $(u \cdot x)$'s (i.e. they lie in $\{-1,+1\}$ as opposed to $\R$).
The goal is to approximately recover $u$ up to scaling with a few (ideally, $O(t \ln d)$) measurements.
 In the non-adaptive setting, the measurement vector
$x$'s are chosen at the beginning, while in the adaptive setting, the measurement vector $x$'s can be chosen sequentially,
based on past observations.
The problem of adaptive one-bit compressed sensing is therefore equivalent to attribute efficient
active halfspace learning in the membership query model~\citep{A88}.
We remark that active learning in the PAC model is more challenging than in the membership model, in that the learner has to query the labels of the unlabeled examples it has drawn.


~\cite{JLBB13} gives an algorithm that has robust recovery guarantees, however it is based on computationally-intractable $\ell_0$ minimization. Inspired by the count sketch data structure~\citep{CCF02}, ~\cite{HB11} proposes an efficient procedure that recovers the support of $u$ using $O(t \ln d)$ queries, and has strong noise tolerance properties. In conjunction with efficient full-dimensional active halfspace learning algorithms~\citep{DKM05,ABL17,CHK17,YZ17}, this procedure
 yields efficient algorithms that have label complexities of $O(t (\ln d + \ln \frac 1 \epsilon ))$
(resp. $O(t (\ln d + \ln \frac 1 \epsilon))$, $O(\frac{t}{(1-2\eta)^2} (\ln d + \ln \frac 1 \epsilon ))$) in the $t$-sparse realizable setting (resp. $t$-sparse $\Omega(\epsilon)$-adversarial noise setting, $t$-sparse $\eta$-bounded noise setting).
~\cite{GNJN13, ABK17} gives upper and lower bounds for {\em universal} one-bit compressed sensing, that is, the same set of measurements can be used to approximately recover {\em any} underlying $t$-sparse signal. In this setting,~\cite{ABK17} shows that, perhaps surprisingly, the number of measurements necessary and sufficient for support recovery is $\tilde{\Theta}(t^2 \ln d)$, as opposed to $\Theta(t \ln d)$ in the non-universal setting.
 ~\cite{PV13a} proposes a linear programming based algorithm that works in the $t$-sparse realizable setting, and has a measurement complexity of $\tilde{O}(\frac{t}{\epsilon^5})$,
based on a new tool named random hyperplane tessellations. ~\cite{L16} gives a support recovery algorithm that tolerates bounded noise, 
using $\alpha$-stable random projections.
\cite{PV13b} proposes a convex programming based algorithm that works in the $t$-sparse $\Omega(\epsilon^2)$-adversarial noise model,
and has a measurement complexity of $\tilde{O}(\frac{t}{\epsilon^{12}})$.

Works on one-bit compressed sensing under the symmetric noise condition has been studied in the literature~\citep{PV13b, ZYJ14, CB15, ZG15}. In this model, it is assumed that there is a known function $g$, such that for all $x$, $\E[y|x] = g(u \cdot x)$. This assumption captures some realistic scenarios, but is nevertheless strong: it requires any two examples that have the same projection on $u$ to have the same conditional label distribution. In contrast, the $t$-sparse adversarial noise and the $t$-sparse bounded noise conditions allow heterogeneous noise levels, even among examples that have the same projection on $u$.
In this setting, the state of the art result of \cite{ZYJ14} gives an nonadaptive algorithm with $O(\frac{t \ln d}{ \epsilon^2})$. It also proposes an adaptive algorithm that works in same setting, achieving a label complexity bound of $O(\min(\frac{t \ln d}{\epsilon^2}, \frac{t\sqrt{d} \ln d}{\epsilon}))$, which is sometimes lower than that of the nonadaptive algorithm.
The special case of Gaussian noise before quantization has been studied extensively, i.e. given $x$, the label $y$ is generated by the formula $y = \sign(u \cdot x + n)$, where $n$ is a Gaussian random variable. \cite{GNR10} shows that when $u$ has a large dynamic range (the absolute value of the ratio between $u$'s largest and smallest nonzero elements in magnitude), adaptive approaches require fewer measurements to identify the support of $u$ than nonadaptive approaches.

We provide a detailed comparison between our work and the results most closely related to ours in Tables~\ref{tab:comp-r}, \ref{tab:comp-an}, and \ref{tab:comp-bn}.

\begin{table}[t]
\centering
\begin{tabular}{llll}
\toprule
Algorithm & Model & Label complexity & Efficient? \\
\midrule
\begin{tabular}{@{}c@{}}\cite{HB11}\\ with \cite{DKM05} \end{tabular} & MQ & $\tilde{O}(t (\ln d + \ln \frac 1 \epsilon))$ & Yes \\
\cite{D05} & PAC & $\tilde{O}(t (\ln d + \ln \frac 1 \epsilon))$ & No \\
\cite{ABHZ16} & PAC & $\tilde{O}(\frac{t}{\epsilon^2} \polylog(d,\frac 1 \epsilon) )$ & Yes \\
Our work & PAC & $\tilde{O}(t \polylog(d,\frac 1 \epsilon) )$ & Yes \\
\bottomrule
\end{tabular}
\caption{A comparison of algorithms for active learning of halfspaces in the $t$-sparse realizable setting (Definition~\ref{def:r}); all the PAC algorithms above work under isotropic log-concave distributions.}
\label{tab:comp-r}
\end{table}

\begin{table}[t]
\centering
\begin{tabular}{lllll}
\toprule
Algorithm & Model & Noise tolerance  & Label complexity & Efficient? \\
\midrule
\begin{tabular}{@{}c@{}}\cite{HB11}\\ with \cite{ABL17} \end{tabular}& MQ & $\nu = \Omega(\epsilon)$ & $\tilde{O}(t (\ln d + \ln \frac 1 \epsilon))$ & Yes \\
\cite{ZC14} & PAC & $\nu = \Omega(\epsilon)$ & $\tilde{O}(t \ln d \ln \frac 1 \epsilon))$ & No \\
\cite{PV13b} & PAC  & $\nu = \Omega(\epsilon^2)$ & $\tilde{O}(\frac{t\ln d}{\epsilon^{12}})$ & Yes \\
\cite{ABHZ16} & PAC & $\nu = \Omega(\epsilon)$ & $\tilde{O}(\frac{t}{\epsilon^2} \polylog(d,\frac 1 \epsilon) )$ & Yes \\
Our work & PAC & $\nu = \Omega(\epsilon)$ & $\tilde{O}(t \polylog(d,\frac 1 \epsilon) )$ & Yes \\
\bottomrule
\end{tabular}
\caption{A comparison of algorithms for active learning of halfspaces in the $t$-sparse $\nu$-adversarial noise setting (Definition~\ref{def:an});  all the PAC algorithms above work under isotropic log-concave distributions.}
\label{tab:comp-an}
\end{table}

\begin{table}[t]
\centering
\begin{tabular}{lllll}
\toprule
Algorithm & Model & Noise tolerance & Label complexity & Efficient? \\
\midrule
\begin{tabular}{@{}c@{}}\cite{HB11}\\ with \cite{CHK17} \end{tabular} & MQ & $\eta \in [0,\frac 1 2)$ & $\tilde{O}(\frac{t}{(1-2\eta)^2} (\ln d + \ln \frac 1 \epsilon))$ & Yes \\
\cite{ZC14} & PAC & $\eta \in [0,\frac 1 2)$ & $\tilde{O}(\frac{t}{(1-2\eta)^2} \ln d \ln \frac 1 \epsilon))$ & No \\
\cite{ABHZ16} & PAC & $\eta \in [0,\frac 1 2)$ & $\tilde{O}((\frac{t}{\epsilon})^{O(\frac{1}{(1-2\eta)^2})} )$ & Yes \\
Our work & PAC & $\eta \in [0, \Omega(1))$ & $\tilde{O}(t \polylog(d,\frac 1 \epsilon) )$ & Yes \\
\bottomrule
\end{tabular}
\caption{A comparison of algorithms for active learning of halfspaces in the $t$-sparse $\eta$-bounded noise setting (Definition~\ref{def:bn}); all the PAC algorithms above work under isotropic log-concave distributions.}
\label{tab:comp-bn}
\end{table}

\section{Preliminaries}
We consider active learning in the PAC model~\citep{V84, KSS94}.
Denote by $\calX := \R^d$ the instance space, and $\calY := \{-1,+1\}$ the label space.
The learning algorithm is given a data distribution $D$ over $\calX \times \calY$. 
Denote by $D_X$ the marginal distribution of $D$ over $\calX$, and $D_{Y|X}$ the conditional distribution of label given instance.
The learning algorithm is also given a concept class, the set of homogeneous linear classifiers (halfspaces) $\calH:=\{\sign(w \cdot x): w \in \R^d \}$.
For any classifier $h: \calX \to \calY$, we denote by $\err(h):=\P_D(h(x) \neq y)$ the error rate of $h$.
Denote by $h^*$ the optimal classifier in $\calH$: $h^*:=\argmin_{h' \in \calH} \err(h')$.
The excess error of classifier $h$ is defined as $\err(h) - \err(h^*)$; in words, it is
the difference between $h$'s error and the best error in $\calH$. A vector $w$ corresponds to a
linear classifier $h_w := \sign(w \cdot x)$ whose decision boundary has $w$ as its normal; define $w^*$ as the unit vector $w$ such that $h_w = h^*$. We define the angle between two vectors $w, w'$ in $\R^d$ as $\theta(w,w') = \arccos(\frac{w \cdot w'}{\|w\|_2 \| w'\|_2})$. \cite{BL13} shows that there exist numerical constants $C_1, C_2 > 0$, such that if $D_X$ is isotropic log-concave, then for all $w, w'$ in $\R^d$,
\begin{equation}
 C_1 \P_D(h_w(x) \neq h_{w'}(x)) \leq \theta(w,w') \leq C_2 \P_D(h_w(x) \neq h_{w'}(x)).
\label{eqn:angdis}
\end{equation}

In active learning, the algorithm has the ability to draw unlabeled examples from $D_X$ and perform adaptive label queries to a labeling oracle $\calO$.
The oracle $\calO$ takes into input an unlabeled example $x$, and returns a label $y \sim D_{Y|X=x}$.
Given a random variable $z$ whose distribution is $\Delta$ over $\calZ$ and a set $T \subset \calZ$, denote by $\Delta|_T$ the conditional distribution of $z$ given that $z$ is in $T$.
An active learning algorithm is said to $(\epsilon,\delta)$-PAC
learn $\calH$ and $D$ with label complexity $n(\epsilon,\delta)$, if with probability $1-\delta$, it performs at most $n(\epsilon,\delta)$ label queries to $\calO$,
and returns a classifier $\hat{h}$ that has excess error at most $\epsilon$.

Given a vector $w$ and example $(x,y)$, the $\tau$-hinge loss $\ell_{\tau}(w, (x, y))$ is defined as $(1 - \frac{y w \cdot x}{\tau})_+$, where $(z)_+:=\max(0, z)$. Denote by $I(\cdot)$ the indicator function, that is, $I(A)$ is $1$ if predicate $A$ is true, is $0$ if $A$ is false.
A vector $v$ in $\R^d$ is said to be $s$-sparse, if it has at most $s$ nonzero entries.
For an integer $s \in \{1,2,\ldots,d\}$, define $\HT_s(\cdot)$ as the hard thresholding operation that takes a vector $v$ in $\R^d$ as input, and outputs a vector that keeps $v$'s $s$ largest entries in absolute value (breaking ties lexicographically), and setting all its other entries to zero~\citep{BD09}.

In this paper, we focus on the setting where there is a sparse halfspace that performs well under $D$.
Specifically, denote by $\calH_t := \{\sign(w \cdot x): w \in \R^d, \| w \|_0 \leq t \}$ the set of $t$-sparse halfspaces.
We consider the following two conditions on $D$:

\begin{definition}
A distribution $D$ over $\calX \times \calY$ is said to satisfy the {\em $t$-sparse $\nu$-adversarial noise} condition for $\nu \in (0,1)$ and $t \in \{1,\ldots,d\}$, if there is a $t$-sparse unit vector $u$, such that $\P_D(\sign(u \cdot x) \neq y) \leq \nu$.
\label{def:an}
\end{definition}
Observe that under this condition, $h_u$ is not necessarily the optimal classifier in $\calH$; in fact, it may not even be the optimal classifier in $\calH_t$. Nevertheless, by triangle inequality and Equation~\eqref{eqn:angdis}, the angle between $u$ and $w^*$ is at most $O(\nu)$.
It can be readily seen that if $t$ and $\nu$ are larger, the learning problem becomes more difficult. When $t = d$, the condition becomes the $\nu$-adversarial noise condition with respect to $\calH$~\citep{ABL17}.

\begin{definition}
A distribution $D$ over $\calX \times \calY$ is said to satisfy the {\em $t$-sparse $\eta$-bounded noise} condition for $\eta \in [0,\frac 1 2)$ and $t \in \{1,\ldots,d\}$, if there is a $t$-sparse unit vector $u$, such that
for every $x \in \calX$, $\P_D(\sign(u \cdot x) \neq y | x) \leq \eta$.
\label{def:bn}
\end{definition}
Under this condition, it can be seen that $h_u$ is the Bayes optimal classifier, therefore $u$ coincides with $w^*$.
It can be readily seen that if $t$ and $\eta$ are larger, the learning problem becomes more difficult. When $t = d$, the condition becomes the $\eta$-bounded noise condition with respect to $\calH$~\citep{MN06}.

Note that the above two conditions characterize different aspects of the data distribution $D$. The $t$-sparse $\nu$-adversarial noise condition only requires an upper bound on the total label flipping probability. On the other hand, the $t$-sparse $\eta$-bounded noise condition characterizes $D_{Y|X}$ everywhere in $\calX$: for every instance $x$, the expected label $\E[y|x]$ has the same sign as $u \cdot x$.
The following condition is a special case of the above two conditions by setting $\nu = 0$ or $\eta = 0$:

\begin{definition}
A distribution $D$ over $\calX \times \calY$ is said to satisfy the {\em $t$-sparse realizable} condition, for $t \in \{1,2,\ldots,d\}$, if there is a $t$-sparse unit vector $u$, such that $\P_D(\sign(u \cdot x) \neq y) = 0$.
\label{def:r}
\end{definition}

\section{Algorithm}
We present our main algorithm, namely Algorithm~\ref{alg:ae_al} in this section. We defer the exact settings of constants $c_1, c_2, c_3$ to Appendix~\ref{sec:params}.
Our algorithm uses the margin-based active learning framework, initially proposed by~\cite{BBZ07}.
Specifically, it proceeds in epochs, where at each epoch $k$, it draws a sample $S_k$ from distribution $D_X|_{B_k}$, queries their labels, and updates its iterate $w_k$ based on $S_k$. Due to technical reasons, at the first epoch ($k=0$), the sampling region $B_0$ and the constraint set $W_0$ are different from those in subsequent epochs. Throughout the process, the algorithm maintains the invariant that at each epoch $k$, $w_k$ is a  $t$-sparse unit vector.

At each epoch $k \geq 1$, the sampling region $B_k$ is a ``small-margin'' band $\{x: |w_{k-1} \cdot x| \leq b_k \}$, with bandwidth $b_k$ descreasing exponentially in $k$.
Then it performs constrained empirical hinge loss minimization over $S_k$, getting a linear classifier $w_k'$.
The constraint set $W_k$ is the intersection between an $\ell_1$ ball and an $\ell_2$ ball, centered at $w_{k-1}$ with different radii ($\rho_k$ and $r_k$). This is similar to the approach in~\cite{PV13b} for tackling the symmetric noise setting, where a linear optimization problem with a similar shaped constraint set is proposed. The construction of $W_k$'s is inspired by version space constructions in the PAC active learning literature~\citep{CAL94,BBL09,H14}.
Throughout the algorithm, we ensure $W_k$ to satisfy the following two properties with high probability: first, $u$ lie in all the $W_k$'s; second, the $W_k$'s are shrinking in size.\footnote{We refer the reader to Lemma~\ref{lem:induct} for a formal statement.}
In addition, the hinge loss used at epoch $k$ is parameterized by $\tau_k$, which also decreases exponentially in $k$.

Observe that $w_k'$ may not be a sparse vector; therefore, we perform a hard thresholding step (applying $\HT_t$), to ensure that our learned halfspace at the end of round $k$, is $t$-sparse. Hard thresholding has been widely used in the (unquantized) compressed sensing literature~\citep[See e.g.][]{BD09,GK09}, however its utility in one-bit compressed sensing is not yet well-understood. For example, ~\cite{JLBB13} proposes an algorithm named BIHT (binary iterative hard thresholding) that has strong empirical performance, but its convergence properties are unknown.
To the best of our knowledge, our work is the first that establishes convergence guarantees for iterative hard thresholding style algorithms for one-bit compressed sensing. We then perform a $\ell_2$ normalization step to ensure that our iterate $w_k$ is an unit vector, which has a scale comparable to $u$.

Finally, we remark that Algorithm~\ref{alg:ae_al} admits a computationally efficient implementation. First, the sampling regions $B_k$'s can be shown to have probability masses at least $\Omega(\epsilon)$ in $D_X$ for all $k$ in $\{0,1,\ldots,k_0\}$, which makes rejection sampling from $D_X|_{B_k}$ take $O(\frac 1 \epsilon)$ time per example.
Second, optimization problem~\eqref{eqn:opt} is convex, and can be approximately solved by e.g. stochastic gradient descent~\citep[See e.g.][Theorem 2]{SZ13} efficiently.

\begin{algorithm}[t]
  \caption{Attribute and computationally efficient active learning of halfspaces}
\begin{algorithmic}[1]
  \REQUIRE sparsity parameter $t$, target error $\epsilon$, failure probability $\delta$.
  \ENSURE  learned halfspace $\hat{w}$.
  \STATE Initialization: $k_0 \gets \lceil \log_2 \frac {1} {C_1 \epsilon} \rceil$, where $C_1$ is defined in Equation~\eqref{eqn:angdis}.
  \FOR{$k = 0, 1, 2 \ldots,k_0$}
  \STATE $S_k \gets $ sample $n_k = c_1 t (\ln d + \ln \frac{1}{\epsilon} + \ln\frac1{\delta_k})^3$ examples from $D_X|_{B_k}$ and query their labels, where
  \[ B_k := \begin{cases} \R^d, & k = 0, \\ \{x: |w_{k-1} \cdot x| \leq b_k \}, & k \geq 1, \end{cases}\]
	$\delta_k = \frac{\delta}{(k+1)(k+2)}$ and $b_k = c_2 \cdot 2^{-k}$.

  \STATE Solve the following optimization problem:
  \begin{equation}
    w_k' \gets \argmin_{w \in W_k} \sum_{(x,y) \in S_k} \ell_{\tau_k}(w, (x, y)),
    \label{eqn:opt}
  \end{equation}
  where
  \[ W_k = \begin{cases} \{ w \in \R^d: \| w \|_2 \leq 1 \text{ and } \| w \|_1 \leq \sqrt{t} \}, & k = 0, \\ \{ w \in \R^d: \| w - w_{k-1} \|_2 \leq r_k \text{ and } \| w - w_{k-1} \|_1 \leq \rho_k \}, & k \geq 1, \end{cases}
  \]
  $r_k = 2^{-k-3}$, $\rho_k = \sqrt{2t} \cdot 2^{-k-3}$, and $\tau_k = c_3 \cdot 2^{-k}$.
	\label{line:hlm}
  \STATE Let $w_k \gets \frac{\HT_t(w_k')}{\| \HT_t(w_k') \|_2}$.
	\label{line:ht}
  \ENDFOR
  \RETURN $w_{k_0}$.
\end{algorithmic}
\label{alg:ae_al}
\end{algorithm}

\section{Performance guarantees}

In this section, we prove Theorem~\ref{thm:main}, the main result of this paper. 
\begin{theorem}
  There exist numerical constants $\mu_1, \mu_2 \in (0, \frac 1 2)$ such that the following holds.
  Suppose $D_X$ is isotropic log-concave, and one of the following two conditions hold:
  \begin{enumerate}
    \item $D$ satisfies the $t$-sparse $\mu_1\epsilon$-adversarial noise condition;
    \item $D$ satisfies the $t$-sparse $\mu_2$-bounded noise condition.
  \end{enumerate}
	In addition, Algorithm~\ref{alg:ae_al} is run with sparsity parameter $t$, target error $\epsilon$ and failure probability $\delta$.
  Then, with probability $1-\delta$, the output halfspace $\hat{w}$ is such that
  $\err(h_{\hat{w}}) - \err(h^*) \leq \epsilon$,
  and the total number of label queries is $O( t \cdot (\ln d + \ln \frac 1 \epsilon)^3 \cdot \ln \frac 1 \epsilon )$.
  \label{thm:main}
\end{theorem}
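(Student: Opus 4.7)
The plan is to prove, by induction on $k$, that with probability at least $1 - \sum_{j \leq k} \delta_j$ the iterate $w_k$ satisfies $\|w_k - u\|_2 \leq 2^{-k-4}$. Running the induction up to $k_0 = O(\log \tfrac 1 \epsilon)$ gives $\theta(w_{k_0}, u) = O(\epsilon)$ via Equation~\eqref{eqn:angdis}; under the bounded-noise condition $u = w^*$, and under the adversarial condition the remark after Definition~\ref{def:an} gives $\theta(u, w^*) = O(\mu_1 \epsilon)$, so in either case the triangle inequality combined with Equation~\eqref{eqn:angdis} yields $\err(h_{\hat w}) - \err(h^*) \leq \epsilon$ after a suitable choice of the constants $c_1, c_2, c_3$. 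The label-complexity bound is immediate from $n_k = \tilde O(t)$ per round and $O(\log \tfrac 1 \epsilon)$ rounds.

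For the inductive step $k \geq 1$, both $u$ and $w_{k-1}$ are $t$-sparse, so $u - w_{k-1}$ is $2t$-sparse and $\|u - w_{k-1}\|_1 \leq \sqrt{2t}\,\|u - w_{k-1}\|_2 \leq \rho_k$ by the inductive hypothesis and Cauchy--Schwarz, placing $u \in W_k$. The core of the proof then has three ingredients.
\textbf{(i)} A uniform deviation bound for the $\tau_k$-hinge loss over $W_k$: using the $\ell_1$-constrained Rademacher complexity bound of~\cite{KST09} and the sub-exponential tails of coordinates of isotropic log-concave vectors conditioned on $B_k$, the empirical and conditional-population hinge losses agree uniformly to a suitably small additive constant, using $\rho_k/\tau_k = \Theta(\sqrt t)$ together with the chosen $n_k = c_1 t (\ln d + \ln \tfrac 1 \epsilon + \ln \tfrac 1{\delta_k})^3$.
\textbf{(ii)} A bound on the conditional population hinge loss $\E[\ell_{\tau_k}(u,(x,y)) \mid x \in B_k]$, which is a small constant: the noise mass inside $B_k$ is at most $\mu_1\epsilon/\P(B_k) = O(\mu_1)$ in the adversarial case (using $\P(B_k) = \Theta(b_k) = \Omega(\epsilon)$ under isotropic log-concavity) and $O(\mu_2)$ in the bounded case, while the ``near-margin'' contribution is controlled by the bounded density of $u \cdot x$.
\textbf{(iii)} The margin-based band-passing lemma of~\cite{BBZ07, BL13}, which converts small conditional disagreement on $B_k$ into a small unconditional angular error, delivering $\|w_k' - u\|_2 \leq 2^{-k-5}$.

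To close the induction, I would use the best-$t$-term approximation property of $\HT_t$ to obtain $\|\HT_t(w_k') - u\|_2 \leq 2\|w_k' - u\|_2$ for the $t$-sparse target $u$, combined with a standard $\ell_2$-normalization perturbation bound (using $\|u\|_2 = 1$) to conclude $\|w_k - u\|_2 \leq 2^{-k-4}$. The main technical obstacle is sub-step (i): one needs a deviation bound that scales only logarithmically in $d$ and is sharp enough to support halving per epoch. The $\ell_2$ constraint alone would give a Rademacher complexity of order $r_k \sqrt{d/n_k}$, polynomial in $d$; it is the $\ell_1$ constraint that gives $O(\rho_k \sqrt{\log d / n_k})$, and one must also carefully control $\|x\|_\infty$ under the conditional distribution $D_X|_{B_k}$, whose marginals differ from those of $D_X$. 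A secondary subtlety arises in step (ii): in the bounded-noise case the per-point bound $\eta \leq \mu_2$ must combine quantitatively with the log-concave margin geometry to keep the hinge loss of $u$ small, which is why $\mu_2$ in Theorem~\ref{thm:main} must be an absolute constant rather than any $\eta < \tfrac 1 2$.
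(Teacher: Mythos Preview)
Your proposal is correct and follows essentially the same approach as the paper: an induction maintaining $u \in W_{k+1}$ (equivalently $\|w_k - u\|_2 \leq r_{k+1}$), driven by (i) an $\ell_1$-Rademacher hinge-loss concentration over $W_k$ via~\cite{KST09}, (ii) smallness of $u$'s conditional hinge loss under either noise model, (iii) the band-to-angle reduction of~\cite{BBZ07,BL13}, and closure via the best-$t$-term property of $\HT_t$ plus normalization. The only places where your sketch is slightly loose are that the noise contribution to the hinge loss is unbounded and requires a Cauchy--Schwarz step against the conditional second moment of $w\cdot x$ (the paper's Lemma~\ref{lem:noisy-hinge}/\ref{lem:variance}), and that one must normalize $w_k'$ before applying $\HT_t$ since the angle bound does not directly control $\|w_k' - u\|_2$; both are routine and match the paper's treatment.
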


As the $t$-sparse realizable setting is a special case of the $t$-sparse adversarial noise setting (by setting $\nu = 0$), Theorem~\ref{thm:main} immediately implies the
following corollary:
\begin{corollary}
	Suppose $D_X$ is isotropic log-concave, and the $t$-sparse realizable condition holds for $D$.
  In addition, Algorithm~\ref{alg:ae_al} is run with sparsity parameter $t$, target error $\epsilon$ and failure probability $\delta$.
  Then, with probability $1-\delta$, the output halfspace $\hat{w}$ is such that
  $\err(h_{\hat{w}}) - \err(h^*) \leq \epsilon$,
  and the total number of label queries is $O( t \cdot (\ln d + \ln \frac 1 \epsilon)^3 \cdot \ln \frac 1 \epsilon )$.
	\label{cor:main}
\end{corollary}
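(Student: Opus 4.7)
The plan is to obtain Corollary~\ref{cor:main} as an immediate consequence of Theorem~\ref{thm:main}, by observing that the $t$-sparse realizable condition is a degenerate instance of the $t$-sparse $\nu$-adversarial noise condition with $\nu = 0$. Concretely, if $D$ satisfies Definition~\ref{def:r}, then there is a $t$-sparse unit vector $u$ with $\P_D(\sign(u \cdot x) \neq y) = 0$, and since $0 \leq \mu_1 \epsilon$ for every $\epsilon > 0$ and every positive constant $\mu_1$, the same $u$ witnesses that $D$ also satisfies the $t$-sparse $\mu_1\epsilon$-adversarial noise condition (Definition~\ref{def:an}) for the numerical constant $\mu_1$ supplied by Theorem~\ref{thm:main}.

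Given this reduction, the proof proceeds in two short steps. First, I verify that the hypotheses of Theorem~\ref{thm:main} hold: $D_X$ is isotropic log-concave by assumption, and the first of the two disjunctive noise hypotheses in Theorem~\ref{thm:main} holds because of the inclusion of the realizable case into the $\mu_1\epsilon$-adversarial noise case just noted. Second, I invoke Theorem~\ref{thm:main} with the same parameters $t$, $\epsilon$, $\delta$ passed into Algorithm~\ref{alg:ae_al}; this produces a halfspace $\hat{w}$ with $\err(h_{\hat{w}}) - \err(h^*) \leq \epsilon$ with probability at least $1-\delta$, and with total label query count $O(t (\ln d + \ln \frac 1 \epsilon)^3 \ln \frac 1 \epsilon)$, which is exactly the conclusion of the corollary.

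There is essentially no obstacle in this derivation beyond making the instantiation explicit, since the algorithm executed in the realizable case is literally the same algorithm, with the same parameter settings (as listed in Appendix~\ref{sec:params}), as in the adversarial-noise case. In particular, one does not need to revisit any of the concentration arguments, the margin-based band construction, or the hard-thresholding analysis used for Theorem~\ref{thm:main}: those arguments were carried out uniformly in $\nu \in [0, \mu_1 \epsilon]$, and the value $\nu = 0$ is admissible. Thus the proof of the corollary is a single appeal to Theorem~\ref{thm:main} after the above identification of the noise parameter, and requires no additional lemmas.
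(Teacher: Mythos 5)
Your proposal is correct and is exactly the paper's argument: the paper derives Corollary~\ref{cor:main} by noting that the $t$-sparse realizable condition is the $\nu=0$ special case of the $t$-sparse $\nu$-adversarial noise condition, so Theorem~\ref{thm:main} applies verbatim. No gaps; the additional remarks about uniformity in $\nu$ and unchanged parameter settings are accurate but not needed beyond the one-line reduction.
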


Theorem~\ref{thm:main} and Corollary~\ref{cor:main} imply that, under the respective noise conditions defined above, Algorithm~\ref{alg:ae_al}
has a label complexity of $O( t \polylog(d,  \frac 1 \epsilon) )$. To the best of our knowledge, this
is the first efficient PAC active learning algorithm that has a label complexity linear in $t$, and polylogarithmic
in $d$ and $\frac 1 \epsilon$. Previous works either need to sacrifice computational efficiency to achieve such guarantee~\citep{D05,ZC14}, or have label complexities polynomial in $d$ or $\frac 1 \epsilon$ ~\citep{ABL17, ABHZ16}. We remark that in the membership query model~\citep{A88, BB08}, efficient algorithms with $O( t \polylog(d,  \frac 1 \epsilon) )$ label complexities are implicit in the literature (e.g. by combining \cite{HB11}'s support recovery algorithm with efficient full-dimensional active halfspace learning algorithms~\citep{DKM05,ABL17,CHK17,YZ17}). In contrast, the focus of this paper is on the more challenging PAC setting, and it is unclear how to modify a membership query algorithm to make it work in the PAC setting.

\subsection{Proof of Theorem~\ref{thm:main}}
Recall that $\delta_k = \frac{\delta}{(k+1)(k+2)}$; note that $\sum_{l=0}^{k_0} \delta_k \leq \delta$. To prove Theorem~\ref{thm:main}, we give exact settings of constants $\mu_1, \mu_2 \in (0, \frac 1 2)$ in Appendix~\ref{sec:params},
such that under either the $t$-sparse $\mu_1\epsilon$-adversarial noise condition or the $t$-sparse
$\mu_2$-bounded noise condition, the following lemma holds:
\begin{lemma}
	For every $k \in \{ 0,1,\ldots,k_0 \}$, there is an event $E_k$ with probability $1-\sum_{l=0}^k \delta_l$, on which $u$ is in $W_{k+1}$.
  \label{lem:induct}
\end{lemma}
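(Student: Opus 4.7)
The plan is to prove Lemma~\ref{lem:induct} by induction on $k$, with the base case $u \in W_0$ being trivial: since $u$ is a $t$-sparse unit vector, $\|u\|_2 = 1$ and $\|u\|_1 \leq \sqrt{t}$ by Cauchy--Schwarz. The interesting work is the inductive step. Assume $u \in W_k$ on a good event $E_{k-1}$; we must show that, with probability at least $1 - \delta_k$, the iterate $w_k$ produced by line~\ref{line:ht} of the algorithm satisfies $\|u - w_k\|_2 \leq r_{k+1}$ and $\|u - w_k\|_1 \leq \rho_{k+1}$, so that $u \in W_{k+1}$. Since both $u$ and $w_k$ are $t$-sparse, the vector $u - w_k$ has at most $2t$ nonzero coordinates, and $\|u - w_k\|_1 \leq \sqrt{2t}\,\|u - w_k\|_2$; thus the $\ell_1$ condition will follow automatically from the $\ell_2$ condition since $\rho_{k+1} = \sqrt{2t}\,r_{k+1}$. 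The whole argument therefore reduces to bounding $\|u - w_k\|_2$.

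To bound this $\ell_2$ distance I would chain together four estimates in the style of \cite{BBZ07,BL13}. First, I would show that on the band $B_k$ the vector $u$ itself has small expected $\tau_k$-hinge loss: under the $\mu_2$-bounded noise or the $\mu_1\epsilon$-adversarial noise hypothesis and using isotropic log-concavity, the noise contributes $O(\mu_2)$ or $O(\mu_1\epsilon / \P_D(B_k)) = O(\mu_1)$, and the noiseless contribution $\E[(1 - |u \cdot x|/\tau_k)_+ \mid x \in B_k]$ is a small constant because $\tau_k$ is chosen of the same order as $b_k$ and the band is thin. Second, because $u \in W_k$, $u$ is feasible in problem~\eqref{eqn:opt}, and so $w_k'$ has empirical hinge loss no larger than that of $u$; combined with a uniform-deviation bound over $W_k$ this yields an upper bound on the population hinge loss of $w_k'$ on $D_{X}|_{B_k}$. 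Third, hinge loss on the band dominates the disagreement probability $\P_{D_X}(\sign(w_k' \cdot x) \ne \sign(u \cdot x) \mid x \in B_k)$ up to constants, which by the band geometry of log-concave distributions and \eqref{eqn:angdis} translates into an angle bound $\theta(w_k', u) \leq c\cdot 2^{-k}$ for a small constant $c$. Fourth, the hard thresholding step satisfies $\|\HT_t(w_k') - u\|_2 \leq 3\|w_k' - u\|_2$ (because $u$ is $t$-sparse, so $\HT_t(w_k')$ is at least as close to $u$ as $w_k'$ restricted to $u$'s support, plus a triangle inequality), and the subsequent $\ell_2$ normalization changes the distance by at most a factor of two since $w_k'$ has norm close to one by the $W_k$ constraint. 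Together with the angle bound this yields $\|u - w_k\|_2 \leq r_{k+1}$.

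The step that will require the most care is the uniform deviation bound needed in the second estimate above. Because the goal is a label complexity polylogarithmic in $d$, we cannot afford a crude $\ell_2$-based Rademacher bound, which would scale as $\sqrt{d/n}$. Instead, I would exploit the $\ell_1$ geometry of $W_k$ (it lives in an $\ell_1$ ball of radius $\rho_k = O(\sqrt{t}\cdot 2^{-k})$ around $w_{k-1}$) and invoke the Kakade--Sridharan--Tewari \citep{KST09} Rademacher bound for linear predictors over $\ell_1$ balls, which scales as $\rho_k \cdot \|x\|_\infty \sqrt{\ln d / n}$. The key nontrivial ingredient is controlling $\|x\|_\infty$ for $x$ drawn from the conditional distribution $D_X|_{B_k}$: isotropic log-concavity implies coordinates are subexponential and hence $\|x\|_\infty = O(\ln(d n_k / \delta_k))$ uniformly over the sample with high probability, but one must verify this bound survives the conditioning on the band $B_k$, which has mass $\Omega(\epsilon)$. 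Combined with the Lipschitz constant $1/\tau_k = O(2^k)$ of the hinge loss, plugging in $n_k = c_1 t (\ln d + \ln\tfrac{1}{\epsilon} + \ln\tfrac{1}{\delta_k})^3$ will make the deviation term small enough for the inductive step to close.

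The main obstacle, then, is neither the induction nor the hard-thresholding step, both of which follow established templates, but the delicate interplay between: (i) the scale of $\tau_k$ relative to $b_k$ chosen so that the hinge loss of $u$ on the band is a sufficiently small absolute constant under both noise conditions simultaneously; (ii) the choice of $\rho_k$ ensuring $u$ remains feasible while $W_k$ is small enough to give good uniform convergence with logarithmic-in-$d$ sample size; and (iii) pushing the angle contraction factor below $\tfrac12$ per epoch so that $r_{k+1} = r_k/2$ is achievable. This balancing of the constants $c_1, c_2, c_3, \mu_1, \mu_2$ is what the appendix settings of Appendix~\ref{sec:params} are designed to accomplish, and verifying that a compatible setting exists is the crux of the inductive step.
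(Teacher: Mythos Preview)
Your proposal is correct and follows essentially the same approach as the paper. The paper factors the inductive step into two named lemmas --- Lemma~\ref{lem:hlm} (hinge-loss minimization on the band yields $\theta(w_k',u)\le 2^{-k-8}\pi$) and Lemma~\ref{lem:truncate} (thresholding plus normalization then gives $u\in W_{k+1}$) --- but your four chained estimates are precisely the content of those lemmas, and your identification of the $\ell_1$-Rademacher bound of \cite{KST09} combined with an $\ell_\infty$ tail bound on $x$ (surviving the conditioning on $B_k$ via its $\Omega(\epsilon)$ mass) as the key concentration ingredient matches the paper's Lemma~\ref{lem:conc}. One small sharpening: the hard-thresholding step actually gives a factor of $2$ rather than $3$, since $\HT_t(w_k')$ is the \emph{closest} $t$-sparse vector to $w_k'$ (not to $u$), so $\|\HT_t(w_k')-w_k'\|_2\le\|u-w_k'\|_2$ and the triangle inequality finishes; and the normalization bound $\|\tfrac{w}{\|w\|_2}-u\|_2\le 2\|w-u\|_2$ holds for any $w$ whenever $\|u\|_2=1$, without needing $\|w\|_2$ close to $1$.
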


The proof of Lemma~\ref{lem:induct} relies on the following two supporting lemmas. The first lemma (Lemma~\ref{lem:hlm}) shows that, $w_k'$ produced in the hinge loss minimization step (line~\ref{line:hlm}) has a small angle with $u$. Specifically, the upper bound on $\theta(w_k',u)$
is halved at each iteration $k$, with the help of constrained hinge loss minimization over a fresh set of $n_k = O(t \polylog(d,\frac1\epsilon))$ labeled examples. This relies on two ideas: first, as is standard
in the margin-based active learning framework~\citep[See e.g.][]{BBZ07,BL13}, it suffices to let $w_k'$ achieve a constant error with respect to the sampling distribution at epoch $k$; second, to ensure that the setting of $n_k$ ensures that $w_k'$ indeed has a constant error under the sampling distribution, we use a novel uniform concentration bound of hinge losses of $W_k$ over $S_k$ tighter than all prior works~\citep{ABL17,ABHZ16}. Thanks to our construction of $W_k$, our concentration bound of hinge losses is of order $\tilde{O}(\sqrt{\frac{t\ln d}{n_k}})$, which can be substantially tighter than
$\tilde{O}(\sqrt{\frac{d}{n_k}})$ used in~\citet{ABL17,HKY15} and $\tilde{O}(\sqrt{\frac{(t \ln d) \cdot 2^k}{n_k}})$ used in~\citet{ABHZ16}. We refer the reader to Appendix~\ref{sec:conc} for a formal statement.


\begin{lemma}
For every $k \in \{ 0, 1,\ldots,k_0 \}$, if $u$ is in $W_k$, then with probability $1-\delta_k$, $\theta(w_k', u) \leq 2^{-k-8} \pi$.
\label{lem:hlm}
\end{lemma}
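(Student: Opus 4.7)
}
The plan is to follow the three-step margin-based analysis introduced by~\cite{BBZ07,BL13}: (i) bound the expected $\tau_k$-hinge loss of the target $u$ itself under the restricted distribution $D|_{B_k}$ by a small absolute constant $\kappa$; (ii) transfer this bound to the empirical minimizer $w_k'$ via uniform concentration of hinge losses over $W_k$; (iii) convert the resulting hinge-loss bound on $D|_{B_k}$ into a $0/1$ disagreement bound on all of $D_X$, and finally into an angular bound via \eqref{eqn:angdis}.

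For step (i), I would use the pointwise inequality
\[
\ell_{\tau_k}(u,(x,y)) \;\leq\; I(|u\cdot x|\leq \tau_k) + I(y\neq \sign(u\cdot x))\bigl(1+|u\cdot x|/\tau_k\bigr).
\]
The first term, taken in expectation under $D|_{B_k}$, is small: since $u\in W_k$ implies $\|u-w_{k-1}\|_2\leq r_k$ (for $k\geq 1$), the projection $u\cdot x$ on the band $|w_{k-1}\cdot x|\leq b_k\asymp \tau_k$ has a bounded density by isotropic log-concavity, so the event $\{|u\cdot x|\leq \tau_k\}$ has small conditional probability. The second term is controlled by the noise assumption together with the mass lower bound $\P(B_k)=\Omega(b_k)\geq \Omega(\epsilon)$: under the $t$-sparse $\mu_1\epsilon$-adversarial condition, dividing the noise mass $\mu_1\epsilon$ by $\P(B_k)$ yields $O(\mu_1)$, and under the $\mu_2$-bounded condition the bound holds pointwise. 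Treating $k=0$ with $B_0=\R^d$ separately (using only isotropy), one obtains $\E_{D|_{B_k}}[\ell_{\tau_k}(u,(x,y))] \leq \kappa$ for $\kappa$ as small as desired once $\mu_1,\mu_2$ are chosen small enough.

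For step (ii), I would invoke the uniform concentration bound deferred to Appendix~\ref{sec:conc}. The key point is that $W_k - w_{k-1}$ is contained in an $\ell_1$ ball of radius $\rho_k=\sqrt{2t}\cdot 2^{-k-3}$, so by the Rademacher complexity bound for linear predictors with $\ell_1$-bounded weights and $\ell_\infty$-bounded data~\citep{KST09}, combined with an $\ell_\infty$ tail bound on $x$ from isotropic log-concavity and the band size $b_k\asymp\tau_k$, the $\tau_k$-hinge losses deviate uniformly over $W_k$ by at most $\tilde{O}(\sqrt{t\ln d/n_k})$. With $n_k=c_1 t(\ln d+\ln\frac1\epsilon+\ln\frac1{\delta_k})^3$, this deviation is smaller than $\kappa$ with probability $1-\delta_k$. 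Since $u\in W_k$ and $w_k'$ minimizes the empirical hinge loss over $W_k$, combining with step (i) gives $\E_{D|_{B_k}}[\ell_{\tau_k}(w_k',(x,y))]\leq 3\kappa$.

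For step (iii), the hinge loss dominates the $0/1$ loss, so $\P_{D|_{B_k}}(h_{w_k'}(x)\neq y)\leq 3\kappa$; a triangle inequality with the analogous bound for $u$ yields $\P_{D|_{B_k}}(h_{w_k'}(x)\neq h_u(x))\leq \kappa'$ for some small constant $\kappa'$. I would then split
\[
\P_D(h_{w_k'}\neq h_u) \;\leq\; \P_D(B_k)\cdot \kappa' + \P_D\bigl(h_{w_k'}\neq h_u,\; x\notin B_k\bigr).
\]
The in-band term is $O(b_k)\kappa' = O(2^{-k}\kappa')$. For the out-of-band term, both $w_k'$ and $u$ lie within $\ell_2$-distance $r_k$ (hence within angle $O(2^{-k})$) of $w_{k-1}$, so by the standard out-of-band estimate for isotropic log-concave distributions~\citep{BL13}, their disagreement with $h_{w_{k-1}}$ outside $\{|w_{k-1}\cdot x|>b_k\}$ decays geometrically in $b_k/\theta$ and contributes at most an arbitrarily small constant times $2^{-k}$. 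Adding the two pieces and applying \eqref{eqn:angdis} yields $\theta(w_k',u)\leq 2^{-k-8}\pi$, as required. The main obstacle will be carefully tracking the absolute constants so that the factor-$\tfrac12$ contraction per epoch is achieved; in particular, for the adversarial case the term $\E_{D|_{B_k}}[(|u\cdot x|/\tau_k)\cdot I(y\neq\sign(u\cdot x))]$ must be bounded (via Cauchy--Schwarz, or using $|u\cdot x|\lesssim b_k$ on the band) without losing a factor, and this is precisely what determines the admissible size of $\mu_1$.
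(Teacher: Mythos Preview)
Your plan is correct and matches the paper's proof almost exactly: the paper packages step~(i) and the noise transfer via an auxiliary ``clean'' distribution $\tilde{D}_k$ (so that $\P_{\tilde{D}_k}(h_{w_k'}(x)\neq y)$ is already the disagreement with $h_u$), whereas you decompose the noisy hinge loss directly and recover the disagreement with $h_u$ by a triangle inequality at the end---these are equivalent manipulations, and steps~(ii) and~(iii) are identical to the paper's. One caveat: the alternative you float, ``$|u\cdot x|\lesssim b_k$ on the band,'' is false (only $|w_{k-1}\cdot x|\leq b_k$ holds, and $(u-w_{k-1})\cdot x$ is unbounded on $B_k$); you must use the Cauchy--Schwarz route together with the variance bound $\E_{D_k}(u\cdot x)^2=O(b_k^2+r_k^2)$, which is precisely what the paper does.
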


The second lemma (Lemma~\ref{lem:truncate}) shows that, performing a hard thresholding operation followed by $\ell_2$ normalization on $w_k'$ (line~\ref{line:ht}) yields a $t$-sparse unit vector $w_k$ that is close to $u$ in terms of both $\ell_1$ and $\ell_2$ distances. This ensures that $W_{k+1}$, the constraint set of the optimization problem at the next epoch, contains $u$. A key fact used in the proof of the lemma is that, the hard thresholding operator $\HT_t$ is effectively a $\ell_2$-projection onto the $\ell_0$ ball $\{w \in \R^d: \| w \|_0 \leq t \}$.

\begin{lemma}
For every $k \in \{ 0,1,\ldots,k_0 \}$, if $\theta(w_k', u) \leq 2^{-k-8} \pi$, then $u$ is in $W_{k+1}$.
\label{lem:truncate}
\end{lemma}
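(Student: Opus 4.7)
The plan is to show that $w_k$, the $t$-sparse unit vector produced by hard-thresholding and renormalizing $w_k'$, lies within both the $\ell_2$-ball of radius $r_{k+1} = 2^{-k-4}$ and the $\ell_1$-ball of radius $\rho_{k+1} = \sqrt{2t}\cdot 2^{-k-4}$ around itself viewed from $u$. Equivalently, I need to prove $\|w_k - u\|_2 \leq 2^{-k-4}$ and $\|w_k - u\|_1 \leq \sqrt{2t}\cdot 2^{-k-4}$. The small-angle hypothesis $\theta(w_k',u)\leq 2^{-k-8}\pi$ gives control on a normalized version of $w_k'$, and the job is to propagate that control through the hard-thresholding step and the renormalization step, while finally converting an $\ell_2$ estimate into an $\ell_1$ estimate using $2t$-sparsity of $w_k - u$.

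For the $\ell_2$ bound I would proceed in three short steps. First, let $\tilde{w}_k' := w_k'/\|w_k'\|_2$; since $u$ is a unit vector, $\|\tilde{w}_k' - u\|_2 = 2\sin(\theta(w_k',u)/2) \leq \theta(w_k',u) \leq 2^{-k-8}\pi$. Second, use that $\HT_t$ is the Euclidean projection onto the set $\{v\in\R^d : \|v\|_0\leq t\}$: because $u$ is $t$-sparse, $\|\HT_t(\tilde{w}_k') - \tilde{w}_k'\|_2 \leq \|u - \tilde{w}_k'\|_2$, so by triangle inequality $\|\HT_t(\tilde{w}_k') - u\|_2 \leq 2\|\tilde{w}_k' - u\|_2 \leq 2^{-k-7}\pi$. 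Note that $\HT_t$ is positive-homogeneous, so $w_k = \HT_t(w_k')/\|\HT_t(w_k')\|_2 = \HT_t(\tilde{w}_k')/\|\HT_t(\tilde{w}_k')\|_2$. Third, renormalization loses at most a factor of two: for any nonzero $v$ and unit $u$, the reverse triangle inequality gives $|\|v\|_2 - 1|\leq \|v-u\|_2$, whence
\begin{equation*}
\Bigl\|\tfrac{v}{\|v\|_2} - u\Bigr\|_2 \leq \Bigl\|\tfrac{v}{\|v\|_2} - v\Bigr\|_2 + \|v-u\|_2 = \bigl|1 - \|v\|_2\bigr| + \|v-u\|_2 \leq 2\|v-u\|_2.
\end{equation*}
Applying this with $v = \HT_t(\tilde{w}_k')$ yields $\|w_k - u\|_2 \leq 2^{-k-6}\pi \leq 2^{-k-4}$, where the final inequality uses $\pi/4 < 1$.

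For the $\ell_1$ bound I would exploit sparsity: both $w_k$ and $u$ are $t$-sparse, so $w_k - u$ is supported on at most $2t$ coordinates, and the standard inequality $\|z\|_1 \leq \sqrt{\|z\|_0}\,\|z\|_2$ gives $\|w_k - u\|_1 \leq \sqrt{2t}\,\|w_k - u\|_2 \leq \sqrt{2t}\cdot 2^{-k-6}\pi \leq \sqrt{2t}\cdot 2^{-k-4}$, again using $\pi/4 < 1$. Combining the two bounds places $u$ in $W_{k+1}$, finishing the proof. The only place where care is needed is the accounting of constants: the angle hypothesis $2^{-k-8}\pi$ must absorb one factor of $2$ from the projection step and one factor of $2$ from the renormalization step while still leaving enough slack to beat $r_{k+1} = 2^{-k-4}$, which works precisely because $\pi < 4$. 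Beyond this constant-chasing, the main conceptual point — and the only step that is not a one-line manipulation — is the interpretation of $\HT_t$ as an $\ell_2$-projection onto the $\ell_0$-ball, so that proximity of $\tilde{w}_k'$ to the $t$-sparse target $u$ automatically transfers to proximity of $\HT_t(\tilde{w}_k')$ to $u$.
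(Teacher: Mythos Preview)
Your proposal is correct and follows essentially the same approach as the paper: normalize $w_k'$, use that $\HT_t$ is the $\ell_2$-projection onto $t$-sparse vectors (the paper's Lemma~\ref{lem:ht}) together with the triangle inequality, then control the renormalization step via the reverse triangle inequality (the paper's Lemma~\ref{lem:normalize}), and finally convert the $\ell_2$ bound to an $\ell_1$ bound using $2t$-sparsity of $w_k - u$. The only cosmetic difference is that the paper absorbs the factor of $\pi$ into the numerical bound at the outset ($2^{-k-8}\pi \leq 2^{-k-6}$) while you carry $\pi$ through and bound it at the end; the constant accounting is equivalent.
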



We are now ready to prove Lemma~\ref{lem:induct}.

\begin{proof}[Proof of Lemma~\ref{lem:induct}]
 We prove the lemma by induction.
\paragraph{Base case.} In the case of $k = 0$, observe that as $u$ has unit $\ell_2$ norm and $u$ is $t$-sparse, by Cauchy-Schwarz, $\| u \|_1 \leq \sqrt{t} \| u\|_2 = \sqrt{t}$. Therefore, $u$ belongs to the set $W_0$ deterministically.
Lemma~\ref{lem:hlm} with $k=0$ shows that there is an event $E_0$ with probability $1-\delta_0$, conditioned on which $\theta(w_0',u) \leq 2^{-8}\pi$. By Lemma~\ref{lem:truncate}, we get that $u$ is in $W_1$.

\paragraph{Inductive case.} For $k \geq 1$, suppose the inductive hypothesis holds. That is, there is an event $E_{k-1}$ with probability $1-\sum_{l=0}^{k-1} \delta_l$, such that
on $E_{k-1}$, $u$ is in $W_k$. By Lemma~\ref{lem:hlm}, there is an event $F_k$ such that $\P(F_k | E_{k-1}) \geq 1 - \delta_k$,
conditioned on which $\theta(w_k', u) \leq 2^{-k-8} \pi$.


Define event $E_k:= E_{k-1} \cap F_k$. Observe that $\P(E_k) = \P(E_{k-1})\P(F_k | E_{k-1}) \geq 1-\sum_{l=0}^{k} \delta_l$.
Now, on event $E_k$, Lemma~\ref{lem:truncate} implies that $u$ is in $W_{k+1}$.
This completes the induction.
\end{proof}

Theorem~\ref{thm:main} is now a direct consequence of Lemma~\ref{lem:induct}; we give its proof below.

\begin{proof}[Proof of Theorem~\ref{thm:main}]
From Lemma~\ref{lem:induct} and the fact that the output $\hat{w}$ is $w_{k_0}$, we have that with probability $1-\sum_{l=0}^{k_0}\delta_l \geq 1-\delta$, $u$ is in $W_{k_0+1}$. By the definition of $W_k$,
\[ \| u - w_{k_0} \|_2 \leq r_{k_0+1} = 2^{-k_0 - 4}. \]
By Lemma~\ref{lem:distangle} in the Appendix and the fact that $\|u\|_2 = 1$, we know that
$\theta(w_{k_0}, u) \leq \pi \| u - w_{k_0} \|_2 \leq 2^{-k_0 - 2} \leq \frac{C_1\epsilon}{2}$.
By the first inequality of Equation~\eqref{eqn:angdis}, we have that
$ \P_D (h_{w_{k_0}}(x) \neq h_{u}(x) ) \leq \frac \epsilon 2. $
Therefore, by triangle inequality and the fact that the output $\hat{w}$ is $w_{k_0}$,
\[ \err(h_{\hat{w}}) - \err(h_u) \leq \frac \epsilon 2. \]
We now consider two separate cases regarding the two different noise conditions:
\begin{enumerate}
\item In the $\mu_1 \epsilon$-adversarial noise setting, we know that $\err(h_u) \leq \mu_1 \epsilon \leq \frac \epsilon 2$.
Therefore,
\[ \err(h_{\hat{w}}) - \err(h^*) \leq \err(h_{\hat{w}}) \leq \err(h_u) + \frac \epsilon 2 \leq \frac \epsilon 2 + \frac \epsilon 2 \leq \epsilon. \]
\item In the $\mu_2$-bounded noise setting, as $h_u$ and $h^*$ are identical,
it immediately follows that $\err(h_{\hat{w}}) - \err(h^*) \leq \frac \epsilon 2 \leq \epsilon$.
\end{enumerate}

We now bound the label complexity of Algorithm~\ref{alg:ae_al}. The total number of labels queried is $\sum_{k=0}^{k_0} n_k$,
where $n_k \leq c_1 \cdot t (\ln d + \ln \frac 1 \epsilon + \ln \frac{k(k+1)}{\delta})^3$, and $k_0 = O(\ln\frac1\epsilon)$.
As a consequence, the total number of label queries is $O(t \cdot (\ln d + \ln \frac 1 \epsilon)^3 \cdot \ln \frac 1 \epsilon)$ in terms of $t, d$ and $\epsilon$.
The theorem follows.
\end{proof}



\section{Conclusions and future work}

We give a computationally efficient PAC active halfspace learning algorithm that enjoys sharp attribute efficient label complexity bounds.
It combines the margin-based framework of~\cite{BBZ07,BL13} with iterative hard thresholding~\citep{BD09, GK09}.
The main novel technical component in our analysis is a uniform concentration bound of hinge losses over shrinking $\ell_1$ balls in the sampling regions.
We outline several promising directions of future research:
\begin{itemize}
\item Can we extend our algorithm to work under $\eta$-bounded noise, when $\eta$ is arbitrarily close to $\frac 1 2$? Recall that the results of \cite{ZC14}
imply a computationally inefficient algorithm with a label complexity of $O(\frac{t \ln d}{(1-2\eta)^2} \ln \frac 1 \epsilon)$ in this setting,
which state of the art computationally efficient algorithms~\citep[e.g.][]{ABHZ16} cannot achieve.

\item Can we design attribute and computationally efficient active learning algorithms that work under broader distributions? Existing results in the active learning and one-bit compressed sensing literature have made substantial progress on settings when the unlabeled distribution is $\alpha$-stable~\citep{L16}, subgaussian~\citep{ALPV14, CB15}, or $s$-concave~\citep{BZ17}; an attribute and computationally efficient, statistically consistent recovery algorithm under any of the above settings would be a step forward.


\item In one-bit compressed sensing, under the symmetric noise condition~\citep{PV13b}, algorithms with sample complexity polynomial in $\frac 1 \epsilon$ have been proposed~\citep{PV13b, ZYJ14, ZG15}.
Can we develop adaptive one-bit compressed sensing algorithms with $O(t \polylog(d,\frac 1 \epsilon))$ measurement complexity in this setting?
\end{itemize}

\section*{Acknowledgments}
I am grateful to Daniel Hsu for suggesting this research direction to me, and many insightful discussions along this line. I would also like to thank Pranjal Awasthi, Jie Shen and Hongyang Zhang for helpful initial conversations about the results in this paper. I thank the anonymous COLT reviewers for their thoughtful comments. Special thanks to Yue Liu, who provided unconditional support throughout this research project.

\bibliography{alsearch}

\newpage
\appendix
\section{Detailed choices of learning and problem parameters}
\label{sec:params}

In this section, we give the exact settings of $c_1, c_2, c_3$ that appears in Algorithm~\ref{alg:ae_al}, and $\mu_1, \mu_2$, the noise rates
that can be tolerated by Algorithm~\ref{alg:ae_al} under the two noise conditions.

Define $D_k$ as
the distribution $D$ over $(x,y)$ conditioned on that $x$ lies in $B_k$.
Although cannot be sampled from directly, for analysis purposes, we define $\tilde{D}$ as the joint distribution of $(x, \sign(u \cdot x))$, and
$\tilde{D}_k$ as the distribution of $\tilde{D}$ conditioned on that $x$ lies in $B_k$.
Let $\lambda > 0$ be a constant, which will be specified at the end of this section.
Given $\lambda$, we define $c_2:=c_2(\lambda)$ such that:
\begin{enumerate}
\item $c_2(\lambda) = O(\ln \frac{1}{\lambda})$,
\item For all $w$ such that $\theta(w,w_{k-1}) \leq 2^{-k-3} \pi$,
\begin{equation}
  \P_D( \sign(w \cdot x) \neq \sign(w_{k-1} \cdot x), |w_{k-1} \cdot x| \geq c_2(\lambda) \cdot 2^{-k} ) \leq \lambda \cdot 2^{-k}.
\label{eqn:margin}
\end{equation}
\end{enumerate}
The existence of such function $c_2(\cdot)$ is guaranteed by Theorem 21 of \cite{BL13}, along with the fact that $D_X$ is isotropic log-concave.

In addition, given $\lambda > 0$, define $c_3(\lambda) := \lambda \min(\cf /81, \cf  c_2(\lambda)/9)$ (where $\cf $ is a numerical constant defined in Lemma~\ref{lem:bandmass}), such that $\tau_k = c_3 2^{-k}$. Under this setting of $\tau_k$, we have that for all $k$ in $\{0,1,\ldots,k_0\}$:
\begin{equation}
\E_{D_k} \ell_{\tau_k}(u, x, y) \leq \P_{D_k}(|u \cdot x| \leq \tau_k) \leq \frac{\P_{D_X}(| u \cdot x | \leq \tau_k)}{\P_{D_X}( x \in B_k )} \leq \frac{9 \tau_k}{\min(\cf /9, \cf  b_k)} \leq \lambda.
\label{eqn:lowerr}
\end{equation}
where the first inequality is from that $\ell_{\tau_k}(u, (x, \sign(u \cdot x))) \in [0,1]$, and $\ell_{\tau_k}(u, (x, \sign(u \cdot x))) = 0$ if $|u \cdot x| \geq \tau_k$;
the second inequality uses the fact that $\P(A|B) \leq \frac{\P(A)}{\P(B)}$ for any two events $A,B$; the third inequality uses Lemma~\ref{lem:bandmass} to upper bound (resp. lower bound) the numerator (resp. the denominator).

Recall that $n_k := c_1 t (\ln d + \ln \frac 1 \epsilon + \ln \frac{1}{\delta_k})^3$. Given $\lambda > 0$ and $c_2(\lambda)$, $c_3(\lambda)$, we set $c_1:=c_1(\lambda)$ such that by Lemmas~\ref{lem:conc}, for all $k$ in $\{0,1,\ldots,k_0\}$, for all $w$ in $W_k$,
\begin{equation}
 |\E_{S_k} \ell_{\tau_k}(w, (x, y)) - \E_{D_k} \ell_{\tau_k}(w, (x, y))| \leq \lambda.
\label{eqn:conc}
\end{equation}

Given $\lambda$ and $c_2(\lambda)$, $c_3(\lambda)$, we also choose $\mu_1 = \mu_1(\lambda), \mu_2 = \mu_2(\lambda) \in (0,\frac 1 2)$ such that under the respective noise condition, for all $k$ in $\{0,1,\ldots,k_0\}$, for all $w$ in $W_k$,
\begin{equation}
| \E_{\tilde{D}_k} \ell_{\tau_k}(w, (x, y)) - \E_{D_k} \ell_{\tau_k}(w, (x, y))| \leq \lambda.
\label{eqn:corrupt}
\end{equation}
The existences of $\mu_1(\lambda)$ and $\mu_2(\lambda)$ are guaranteed in light of Lemmas~\ref{lem:tv-an} and~\ref{lem:tv-bn}.

Define $f(\lambda') = C_2(45 c_2(\lambda') \lambda' + 5\lambda')$. Observe that by the definition of $c_2(\cdot)$, $f(\lambda')$ goes to zero as $\lambda'$ goes down to zero. Therefore, we can select a value of $\lambda > 0$, such that $f(\lambda) \leq 2^{-8} \pi$.
Note that our selection of $\lambda$ also determines the value of $c_1$, $c_2$, $c_3$ and $\mu_1$, $\mu_2$.

\section{Learning guarantee at each epoch}
\label{sec:epoch}
In this section, we prove two key lemmas, namely
Lemmas~\ref{lem:hlm} and~\ref{lem:truncate}, both of which serve as the basis for Lemma~\ref{lem:induct}.

\subsection{Proof of Lemma~\ref{lem:hlm}}
The proof of Lemma~\ref{lem:hlm} is based on
a uniform concentration bound on the $\tau_k$-hinge loss over $W_k$ in the sampling region $B_k$, namely Lemma~\ref{lem:conc}.
Specifically, Lemma~\ref{lem:conc} implies that the
difference between the empirical hinge losses and the expected hinge losses for all $w$ in $W_k$ with respect to $D_k$ is uniformly bounded by $\tilde{O}(\sqrt{\frac{t (\ln d + \ln \frac 1 \epsilon)^3}{n_k}})$.
As will be seen in the analysis, only a constant concentration error $\lambda$ is required in the hinge loss minimization step (see Equation~\eqref{eqn:conc}). Therefore, the setting of $n_k = O(t (\ln d + \ln \frac 1 \epsilon)^3)$ fulfills this requirement.

\begin{proof}[Proof of Lemma~\ref{lem:hlm}]
We consider the cases of $k = 0$ and $k \geq 1$ separately.
\paragraph{Case 1: $k = 0$.} By Lemma~\ref{lem:hlm-grt} below and the fact that $D = D_0$, $\P_D(\sign(w_0' \cdot x) \neq \sign(u \cdot x)) \leq 5\lambda$ holds.
In addition, by the second inequality of of Equation~\eqref{eqn:angdis},
we have that $\theta(w_0',u) \leq 5 C_2 \lambda$.
By the definiton of $\lambda$, it is at most $2^{-8} \pi$.

\paragraph{Case 2: $k \geq 1$.} By Lemma~\ref{lem:hlm-grt} below, $\P_{D_k}(\sign(w_k' \cdot x) \neq \sign(u \cdot x)) \leq 5\lambda$ holds. We now show that the above fact implies that the angle between $w_k'$ and $u$ is
at most $2^{-k-8}\pi$. This implication is well known in the margin-based
active learning literature~\citep{BBZ07, BL13}; we provide the proof here for completeness.

By Lemma~\ref{lem:bandmass}, $\P_{D_k}(\sign(w_k' \cdot x) \neq \sign(u \cdot x)) \leq 5\lambda$ implies that
 \begin{eqnarray}
	 &&\P_{D}(\sign(w_k' \cdot x) \neq \sign(u \cdot x), x \in B_k) \nonumber \\
	 &=& \P_{D_k}(\sign(w_k' \cdot x) \neq \sign(u \cdot x)) \cdot \P_{D_X}(x \in B_k) \leq 5 \lambda \cdot 9 c_2(\lambda) 2^{-k} \leq 45 \lambda c_2(\lambda) 2^{-k}.
	 \label{eqn:inband}
\end{eqnarray}

On the other hand, observe that for all $w$ in $W_k$, $\| w - w_{k-1} \|_2 \leq 2^{-k-3}$.
Using Lemma~\ref{lem:distangle} and the fact that $w_{k-1}$ is a unit vector, we get that for all $w$ in $W_k$,
$ \theta(w,w_{k-1}) \leq 2^{-k-3}\pi. $
Specifically, by Equation~\eqref{eqn:margin}, we have that
	\[ \P_D(\sign(w \cdot x) \neq \sign(w_{k-1} \cdot x), x \notin B_k) \leq \lambda 2^{-k} \]
 holds for $w \in \{u, w_k'\} \subset W_k$ respectively. Therefore, by triangle inequality,
 \begin{eqnarray}
&& \P_D(\sign(w_k' \cdot x) \neq \sign(u \cdot x), x \notin B_k) \nonumber \\
&\leq& \P_D(\sign(w_k' \cdot x) \neq \sign(w_{k-1} \cdot x), x \notin B_k) +  \P_D(\sign(u \cdot x) \neq \sign(w_{k-1} \cdot x), x \notin B_k) \nonumber \\
&\leq& 2 \lambda 2^{-k}. \label{eqn:outband}
\end{eqnarray}



Combining Equations~\eqref{eqn:inband} and~\eqref{eqn:outband}, we have that
	\[ \P_D(\sign(w_k' \cdot x) \neq \sign(u \cdot x) ) \leq (45 c_2(\lambda) \lambda + 2 \lambda) 2^{-k}. \]

Applying the second inequality of Equation~\eqref{eqn:angdis} gives that
	\[ \theta(w_k', u) \leq C_2 (45 c_2(\lambda) + 2) \lambda 2^{-k}. \]

By the definition of $\lambda$, the above is at most $2^{-k-8} \pi$. 

Combining the above two cases, the lemma follows.
\end{proof}

\begin{lemma}
For every $k$ in $\{0,1,\ldots,k_0\}$, if $u$ is in $W_k$, then
\[ \P_{D_k}(\sign(w_k' \cdot x) \neq \sign(u \cdot x)) \leq 5\lambda. \]
\label{lem:hlm-grt}
\end{lemma}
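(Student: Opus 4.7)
\textbf{Proof plan for Lemma~\ref{lem:hlm-grt}.} The plan is to run the standard hinge-loss comparison chain that converts optimality of $w_k'$ on $S_k$ into a statement about $\P_{D_k}(\sign(w_k' \cdot x) \neq \sign(u \cdot x))$. First I would observe that the event $\{\sign(w_k' \cdot x) \neq \sign(u \cdot x)\}$ depends only on $x$, and that $D_k$ and $\tilde{D}_k$ share the same $x$-marginal (both are $D_X|_{B_k}$). Hence it suffices to bound $\P_{\tilde{D}_k}(\sign(w_k' \cdot x) \neq \sign(u \cdot x))$, which by the usual inequality $I(\sign(w \cdot x)\neq y) \leq \ell_{\tau_k}(w,(x,y))$ is at most $\E_{\tilde{D}_k}\ell_{\tau_k}(w_k',(x,\sign(u\cdot x)))$.

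Next I would bound $\E_{\tilde{D}_k}\ell_{\tau_k}(w_k',\cdot)$ by chaining four approximations, each costing $\lambda$. Concretely, use Equation~\eqref{eqn:corrupt} to pass from $\tilde{D}_k$ to $D_k$, use the uniform hinge-loss concentration Equation~\eqref{eqn:conc} (applicable since $w_k'\in W_k$) to pass from $D_k$ to the empirical measure $S_k$, then invoke the optimality of $w_k'$ on the constraint set $W_k$, which is valid because by hypothesis $u\in W_k$, to replace $w_k'$ by $u$, and finally apply Equation~\eqref{eqn:conc} at $u$ to go back to $D_k$ and Equation~\eqref{eqn:corrupt} to go back to $\tilde{D}_k$. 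This yields
\[
\E_{\tilde{D}_k}\ell_{\tau_k}(w_k',\cdot) \;\leq\; \E_{\tilde{D}_k}\ell_{\tau_k}(u,\cdot) + 4\lambda.
\]
To finish, I would bound $\E_{\tilde{D}_k}\ell_{\tau_k}(u,\cdot)\leq \lambda$ using the chain in Equation~\eqref{eqn:lowerr}: since under $\tilde{D}_k$ the label is $\sign(u\cdot x)$, the hinge loss vanishes outside $\{|u\cdot x|\leq \tau_k\}$ and is at most $1$ inside, then apply Lemma~\ref{lem:bandmass} to upper bound the small-margin mass by $\lambda$ given our tuning of $c_3 = c_3(\lambda)$. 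Adding these five $\lambda$'s gives $\P_{D_k}(\sign(w_k'\cdot x)\neq \sign(u\cdot x))\leq 5\lambda$.

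The main thing to verify carefully is that each link in the chain is legitimate. The applications of Equation~\eqref{eqn:conc} are the most delicate: the one at $w_k'$ needs a \emph{uniform} bound over $W_k$, since $w_k'$ is a data-dependent vector, which is exactly what Equation~\eqref{eqn:conc} provides; the one at $u$ only needs a pointwise bound, which is strictly weaker. Similarly both applications of Equation~\eqref{eqn:corrupt} are justified because $u,w_k'\in W_k$ under the inductive assumption. Beyond these verifications, there is no real calculation to perform: the calibration of $c_3$, $c_1$ and $\mu_1,\mu_2$ to $\lambda$ has already been fixed in Appendix~\ref{sec:params} precisely so that each approximation cost equals $\lambda$.
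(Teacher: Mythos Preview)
Your proposal is correct and follows essentially the same approach as the paper: the paper runs exactly the same chain of inequalities (hinge upper bounds 0-1, then \eqref{eqn:corrupt}, \eqref{eqn:conc}, optimality of $w_k'$ with $u\in W_k$, \eqref{eqn:conc}, \eqref{eqn:corrupt}, and finally \eqref{eqn:lowerr}) to accumulate the five $\lambda$'s. Your remark that the application of \eqref{eqn:conc} at $w_k'$ genuinely requires the uniform bound while at $u$ a pointwise bound suffices is a nice clarification the paper leaves implicit.
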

\begin{proof}
	If $u$ is in $W_k$, then we have the following chain of inequalities:
	\begin{eqnarray*}
	  \P_{D_k}(\sign(w_k' \cdot x) \neq \sign(u \cdot x))
		&=& \P_{\tilde{D}_k}(\sign(w_k' \cdot x) \neq y) \\
	  &\leq& \E_{\tilde{D}_k} \ell_{\tau_k}(w_k', (x, y)) \\
	  &\leq& \E_{D_k} \ell_{\tau_k}(w_k', (x, y)) + \lambda  \\
	  &\leq& \E_{S_k} \ell_{\tau_k}(w_k', (x, y)) + 2\lambda \\
	  &\leq& \E_{S_k} \ell_{\tau_k}(u, (x, y)) + 2\lambda \\
	  &\leq& \E_{D_k} \ell_{\tau_k}(u, (x, y)) + 3\lambda \\
	  &\leq& \E_{\tilde{D}_k} \ell_{\tau_k}(u, (x, y)) + 4\lambda \\
	  &\leq& \lambda + 4\lambda = 5\lambda, \\
	\end{eqnarray*}
	where the first inequality is from the fact that the $\tau_k$-hinge loss is an upper bound of the 0-1 loss; the second inequality is from
	Equation~\eqref{eqn:corrupt} and that $w_k' \in W_k$; the third inequality is from Equation~\eqref{eqn:conc} and that $w_k' \in W_k$;
	the fourth inequality is by the optimality of $w_k'$ in optimization problem~\eqref{eqn:opt} and that $u \in W_k$; the fifth inequality is from Equation~\eqref{eqn:conc} and that $u \in W_k$;
	the sixth inequality is from Equation~\eqref{eqn:corrupt} and that $u \in W_k$; the last inequality is from Equation~\eqref{eqn:lowerr}.
\end{proof}

The following lemma is used in the proof of Lemma~\ref{lem:hlm}; it establishes a connection between the angle and the $\ell_2$ distance of two vectors, when one of the vectors has unit $\ell_2$ norm.

\begin{lemma}
Suppose $v$ is an unit vector in $\R^d$ (that is, $\|v\|_2 = 1$). Then, for any vector $w$ in $\R^d$,
$\theta(w,v) \leq \pi \| w - v \|_2$.
\label{lem:distangle}
\end{lemma}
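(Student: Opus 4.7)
The plan is to reduce the claim to the case where both vectors are unit vectors, where the bound between angle and Euclidean distance is classical, and then absorb the (possibly non-unit) norm of $w$ via a triangle-inequality step.

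First I would handle the degenerate case $w = 0$ by the conventions of $\theta(\cdot,\cdot)$, or simply note that under any reasonable convention $\theta(w,v) \leq \pi = \pi \|v\|_2 = \pi\|w - v\|_2$, so the bound is trivial. Assume henceforth $w \neq 0$, and set $w' := w/\|w\|_2$. Since scaling preserves the normalized inner product in the definition of $\theta$, we have $\theta(w,v) = \theta(w',v)$, so it suffices to bound $\theta(w',v)$ by $\pi\|w - v\|_2$.

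Next, I would use the exact identity for the chord length between two unit vectors: since $\|v\|_2 = \|w'\|_2 = 1$,
\begin{equation*}
\|w' - v\|_2^2 \;=\; 2 - 2\cos\theta(w',v) \;=\; 4\sin^2\!\bigl(\theta(w',v)/2\bigr),
\end{equation*}
so $\|w' - v\|_2 = 2\sin(\theta(w',v)/2)$. Combined with the elementary inequality $\sin(\alpha) \geq (2/\pi)\alpha$ for $\alpha \in [0,\pi/2]$ (which is valid because $\theta(w',v)/2 \in [0,\pi/2]$), this yields $\|w' - v\|_2 \geq (2/\pi)\,\theta(w',v)$, i.e.\ $\theta(w',v) \leq (\pi/2)\,\|w' - v\|_2$.

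Finally I would relate $\|w' - v\|_2$ back to $\|w - v\|_2$. By the triangle inequality,
\begin{equation*}
\|w' - v\|_2 \;\leq\; \|w' - w\|_2 + \|w - v\|_2 \;=\; \bigl|1 - \|w\|_2\bigr| + \|w - v\|_2,
\end{equation*}
and by the reverse triangle inequality applied with $\|v\|_2 = 1$, $\bigl|1 - \|w\|_2\bigr| = \bigl|\|v\|_2 - \|w\|_2\bigr| \leq \|w - v\|_2$. Hence $\|w' - v\|_2 \leq 2\|w - v\|_2$, and plugging this into the bound from the previous step gives $\theta(w,v) = \theta(w',v) \leq \pi\|w - v\|_2$. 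There is no real obstacle here; the only subtlety is making sure the normalization step is harmless, which is exactly what the reverse-triangle step secures.
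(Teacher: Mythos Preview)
Your proof is correct and follows essentially the same route as the paper: normalize $w$ to a unit vector $w'$, use the chord-length identity $\|w'-v\|_2 = 2\sin(\theta(w',v)/2)$ together with Jordan's inequality $\sin\alpha \geq (2/\pi)\alpha$ on $[0,\pi/2]$, and then bound $\|w'-v\|_2 \leq 2\|w-v\|_2$ via the triangle and reverse-triangle inequalities. The paper packages the last step as a separate lemma (Lemma~\ref{lem:normalize}) but the argument is identical to yours; you also handle the $w=0$ edge case explicitly, which the paper omits.
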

\begin{proof}
Denote by $\hat{w}$ the $\ell_2$ normalized version of $w$, i.e. $\hat{w} = \frac{w}{\|w\|_2}$.
Lemma~\ref{lem:normalize} below implies that
\begin{equation}
	\| \hat{w} - v \|_2 \leq 2\| w - v \|_2.
	\label{eqn:normalizedl2}
\end{equation}
Consequently,
\[ \theta(w, v) \leq \frac{\pi}{2} \cdot 2\sin\frac{\theta(w, v)}{2} = \frac{\pi}{2} \cdot 2\sin\frac{\theta(\hat{w}, v)}{2} = \frac{\pi}{2} \| \hat{w} - v \|_2  \leq \pi \| w - v \|_2.\]
where the first inequality is from the elementary inequality that $\phi \leq \frac \pi 2 \sin \phi$ for $\phi \in [0,\frac \pi 2]$ (by taking $\phi = \frac{\theta(w, v)}{2}$),
the second inequality is from the identity that $\| \hat{w} - v\|_2 = 2\sin\frac{\theta(\hat{w}, v)}2$ as both $\hat{w}$ and $v$ are unit vectors, and the last inequality is
from Equation~\eqref{eqn:normalizedl2}.
\end{proof}

The following lemma is used in the proof of Lemma~\ref{lem:distangle}; it uses the fact that $\ell_2$ normalization is an $\ell_2$ projection onto the unit sphere.

\begin{lemma}
Suppose $v$ is an unit vector in $\R^d$ (that is, $\|v\|_2 = 1$). Then, for any vector $w$ in $\R^d$,
\[ \| \frac{w}{\|w\|_2} - v \|_2 \leq 2 \| w - v \|_2. \]
\label{lem:normalize}
\end{lemma}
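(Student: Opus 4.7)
The plan is to exploit the fact, mentioned in the remark preceding the lemma, that the map $w \mapsto w / \|w\|_2$ is the $\ell_2$-projection onto the unit sphere $S^{d-1} = \{u \in \R^d : \|u\|_2 = 1\}$. Once we know that $w/\|w\|_2$ is the closest point on $S^{d-1}$ to $w$, and $v$ is also a point on $S^{d-1}$, the desired bound follows immediately from the triangle inequality.

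More concretely, I proceed in three steps. First I dispense with the trivial case: if $w = 0$ the statement is vacuous (the left-hand side is undefined, and the bound is understood to apply whenever $w/\|w\|_2$ makes sense), so I assume $w \neq 0$. Second, I prove the projection identity
\begin{equation*}
\frac{w}{\|w\|_2} \;=\; \argmin_{u \in \R^d,\, \|u\|_2 = 1} \|w - u\|_2.
\end{equation*}
This follows from expanding $\|w - u\|_2^2 = \|w\|_2^2 - 2 (w \cdot u) + 1$ for any unit vector $u$; the only $u$-dependent term is $-2(w \cdot u)$, which by Cauchy--Schwarz is minimized precisely at $u = w/\|w\|_2$. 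In particular, specializing this minimality to the unit vector $v$ yields
\begin{equation*}
\bigl\|\tfrac{w}{\|w\|_2} - w \bigr\|_2 \;\leq\; \|v - w\|_2.
\end{equation*}

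Third, I combine this with the triangle inequality:
\begin{equation*}
\bigl\|\tfrac{w}{\|w\|_2} - v \bigr\|_2 \;\leq\; \bigl\|\tfrac{w}{\|w\|_2} - w \bigr\|_2 + \|w - v\|_2 \;\leq\; 2\, \|w - v\|_2,
\end{equation*}
which is exactly the claimed inequality. There is no substantive obstacle here; the only thing to be careful about is invoking Cauchy--Schwarz in the right direction when identifying the projection, and making sure the triangle inequality is applied at the point $w$ (rather than at $v$ or at $w/\|w\|_2$) so that both terms on the right can be bounded by $\|w - v\|_2$.
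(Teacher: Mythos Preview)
Your proof is correct and follows essentially the same route as the paper: both establish $\bigl\|\tfrac{w}{\|w\|_2} - w\bigr\|_2 \leq \|w - v\|_2$ and then apply the triangle inequality at $w$. The only cosmetic difference is in justifying that intermediate bound---you invoke the nearest-point-projection property via Cauchy--Schwarz, whereas the paper computes $\bigl\|\tfrac{w}{\|w\|_2} - w\bigr\|_2 = \bigl|\,\|w\|_2 - 1\,\bigr| = \bigl|\,\|w\|_2 - \|v\|_2\,\bigr|$ directly and applies the reverse triangle inequality.
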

\begin{proof}
Denote by $\hat{w}$ the $\ell_2$ normalized version of $w$, i.e. $\hat{w} = \frac{w}{\|w\|_2}$.
We have that by triangle inequality,
\[ \| \hat{w} - w \|_2 = \| (\frac{1}{\|w\|_2}-1) w \|_2 = | \| w \|_2  - 1 | = | \| w \|_2  - \| v \|_2 | \leq \| w - v \|_2. \]
Again by triangle inequality,
\[ \| \hat{w} - v \|_2 \leq \| \hat{w} - w \|_2 + \| w - v \|_2 \leq 2\| w - v \|_2. \]
The lemma follows.
\end{proof}


\subsection{Proof of Lemma~\ref{lem:truncate}}

The proof of Lemma~\ref{lem:truncate} is based on the key insight that the hard thresholding operation $\HT_t$
 is effectively a projection onto the $\ell_0$-ball $\{w \in \R^d: \| w \|_0 \leq t\}$; see Lemma~\ref{lem:ht}
 for a formal description.

\begin{proof}[Proof of Lemma~\ref{lem:truncate}]
  Denote by $\hat{w}_k'$ the $\ell_2$ normalized version of $w_k'$: $\hat{w}_k':=\frac{w_k'}{\| w_k' \|_2}$.
  Under the condition that $\theta(w_k', u) \leq 2^{-k-8} \pi$, as $\hat{w}_k'$ and $u$ are both unit vectors, we have
\begin{equation*}
  \| \hat{w}_k' - u \|_2 = 2 \sin \frac{\theta(w_k', u)}{2} \leq \theta(w_k', u) \leq 2^{-k-8} \pi \leq 2^{-k-6}.
\end{equation*}
Now, by Lemma~\ref{lem:ht} below, we have that
$\| \hat{w}_k' - \HT_t(\hat{w}_k') \|_2 \leq   \| \hat{w}_k' - u \|_2 \leq 2^{-k-6}$. By triangle inequality of $\ell_2$ distance, we have that
\begin{equation*}
 \| \HT_t(\hat{w}_k') - u \|_2 \leq \| \hat{w}_k' - w_k \|_2 + \| \hat{w}_k' - u \|_2 \leq 2^{-k-5}.
\end{equation*}
Observe that as
$w_k$ and $\hat{w}_k'$ are equal up to scaling, $w_k := \frac{\HT_t(w_k')}{\| \HT_t(w_k') \|_2}$ is identically $\frac{\HT_t(\hat{w}_k')}{\| \HT_t(\hat{w}_k') \|_2}$.
Applying Lemma~\ref{lem:normalize} with $w = \HT_t(\hat{w}_k')$ and $v = u$, we get that
\[ \| w_k - u \|_2 \leq 2 \| \HT_t(\hat{w}_k') - u \|_2 \leq 2^{-k-4} = r_{k+1}. \]
In addition, as $w_k$ and $u$ are both $t$-sparse, $w_k - u$ is $2t$-sparse. Therefore, by Cauchy-Schwarz, $ \| w_k - u \|_1 \leq \sqrt{2t} \| w_k - u \|_2 \leq \sqrt{2t} r_{k+1} = \rho_{k+1}$.
Hence, $u$ is in the set $\{ w \in \R^d: \| w - w_k \|_2 \leq r_{k+1} \text{ and } \| w - w_k \|_1 \leq \rho_{k+1} \}$, namely $W_{k+1}$.
\end{proof}




\begin{lemma}
Suppose $w$ is a vector in $\R^d$. Then, for any $t$-sparse vector $v$ in $\R^d$,
\[ \| \HT_t(w) - w \|_2 \leq \| v - w \|_2. \]
In other words, $\HT_t(w)$ is the best $t$-sparse approximation to $w$, measured in $\ell_2$ distance.
\label{lem:ht}
\end{lemma}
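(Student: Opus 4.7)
The plan is to verify the standard fact that hard thresholding is the $\ell_2$-projection onto the set of $t$-sparse vectors by a direct support-splitting argument. First I would let $S^\star \subseteq \{1,\ldots,d\}$ denote the support of $\HT_t(w)$, i.e.\ the indices of the $t$ largest entries of $w$ in absolute value (with ties broken lexicographically). By the definition of $\HT_t$, the vector $\HT_t(w) - w$ is supported on the complement $\{1,\ldots,d\} \setminus S^\star$, and its entries there simply equal $-w_i$. Hence
\[ \| \HT_t(w) - w \|_2^2 \;=\; \sum_{i \notin S^\star} w_i^2. \]

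Next, let $v$ be an arbitrary $t$-sparse vector in $\R^d$ and let $S := \mathrm{supp}(v)$, so $|S| \leq t$. Splitting the $\ell_2^2$ norm along $S$ and its complement gives
\[ \| v - w \|_2^2 \;=\; \sum_{i \in S} (v_i - w_i)^2 + \sum_{i \notin S} w_i^2 \;\geq\; \sum_{i \notin S} w_i^2, \]
since the terms on $S$ are nonnegative. It therefore suffices to show $\sum_{i \notin S} w_i^2 \geq \sum_{i \notin S^\star} w_i^2$, or equivalently (by subtracting the total $\|w\|_2^2$), $\sum_{i \in S^\star} w_i^2 \geq \sum_{i \in S} w_i^2$.

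This last inequality is the essence of the lemma and follows directly from the extremal property of $S^\star$: among all subsets of $\{1,\ldots,d\}$ of cardinality at most $t$, $S^\star$ maximizes $\sum_{i \in T} w_i^2$, because $|w_i|^2$ is a monotone function of $|w_i|$ and $S^\star$ collects the $t$ largest values of $|w_i|$. Formally, if $|S| \leq t = |S^\star|$, one can construct an injection $\phi: S \setminus S^\star \to S^\star \setminus S$ (these sets have the same or smaller/larger sizes respectively, since $|S| \leq |S^\star|$), and by construction every $i \in S^\star \setminus S$ satisfies $|w_i| \geq |w_j|$ for all $j \in S \setminus S^\star$, which gives $\sum_{i \in S^\star \setminus S} w_i^2 \geq \sum_{j \in S \setminus S^\star} w_j^2$; adding $\sum_{i \in S \cap S^\star} w_i^2$ to both sides yields the desired inequality. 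Chaining the two displayed inequalities completes the proof. I do not anticipate a serious obstacle here; the only mild subtlety is making the injection argument precise in the presence of ties, which is handled by the lexicographic tie-breaking rule built into $\HT_t$.
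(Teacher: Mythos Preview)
Your proposal is correct and follows essentially the same approach as the paper: both compute $\|\HT_t(w)-w\|_2^2$ as the sum of squares of the $d-t$ smallest-magnitude entries, drop the nonnegative on-support terms from $\|v-w\|_2^2$, and then invoke the extremal property that $S^\star$ maximizes $\sum_{i\in T} w_i^2$ over $|T|\le t$. The only cosmetic difference is that the paper phrases things via order statistics $w_{(i)}$ rather than supports, and states the comparison as ``any $d-t$ entries have squared sum at least that of the bottom $d-t$'' without the explicit injection you provide.
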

\begin{proof}
Denote by $w_{(1)}, w_{(2)}, \ldots, w_{(d)}$ the $d$ entries of $w$ in descending
order in magnitude. We have that
\[ \| \HT_t(w) - w \|_2^2 = \sum_{i=t+1}^d w_{(i)}^2. \]
On the other hand, for any $t$-sparse vector $v$, denote by $S$ its support ($|S| \leq t$). We have that
\[ \| v - w \|_2^2 \geq \sum_{i \in \{1,\ldots,d\} \setminus S} w_i^2 \geq \sum_{i=t+1}^d w_{(i)}^2, \]
where the second inequality is from that the sum of squares of any $d-t$ entries in $w$ must be greater than that of the bottom $d-t$ entries.
The lemma follows.
\end{proof}



\section{The uniform concentration of hinge losses in label query regions}
\label{sec:conc}
In contrast to~\cite{ABHZ16} where the constraint set of the hinge loss minimization problem at epoch $k$ is the intersection of an $\ell_2$ ball of $O(2^{-k})$ radius and an $\ell_1$ ball of $O(\sqrt{t})$ radius,
Algorithm~\ref{alg:ae_al} defines the constraint set $W_k$ to be the intersection of an $\ell_2$ ball of $O(2^{-k})$ radius and an $\ell_1$ ball of $O(\sqrt{t} 2^{-k})$ radius. The following key lemma, namely Lemma~\ref{lem:conc}, shows the advantage of our construction of $W_k$.
Specifically, it establishes a sharp uniform concentration of hinge losses $\ell_{\tau_k}$ over $W_k$, with respect to sample $S_k$ drawn from distribution $D_k$. Observe that the concentration bound is $\tilde{O}(\sqrt{\frac{t (\ln d + \ln \frac 1 \epsilon)^3}{n_k}})$; if one were to use the constraint set in~\cite{ABHZ16},
one would get concentration bounds of order $\tilde{O}(\sqrt{\frac{ (t \ln d) \cdot 2^k}{n_k}})$, which has an exponential dependence in $k$.

\begin{lemma}
For any $c_2, c_3 > 0$, there exists a constant $\cs  > 0$ such that the following holds.
Given $k$ in $\{0, 1,\ldots,k_0\}$,
suppose $S_k$ is a sample of size $n_k$ drawn from distribution $D_k$. Then with probability
$1-\delta_k$, for all $w \in W_k$, we have:
\[
|\E_{S_k} \ell_{\tau_k}(w, (x, y)) - \E_{D_k} \ell_{\tau_k}(w, (x, y))|
\leq \cs  \ln\frac{n_k d}{\epsilon \delta_k} \cdot \sqrt{\frac{t(\ln d + \ln \frac 2 {\delta_k})}{n_k}}.
\]
\label{lem:conc}
\end{lemma}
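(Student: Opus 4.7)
The plan is to derive the uniform concentration bound via the standard symmetrization-plus-McDiarmid framework, applied to the class $\mathcal{L}_k := \{\ell_{\tau_k}(w,\cdot): w \in W_k\}$ on samples drawn from $D_k$. The key structural fact that prevents the bound from blowing up in $k$ is the coupled shrinkage of $\rho_k$ and $\tau_k$, which makes $\rho_k/\tau_k = O(\sqrt{t})$ uniformly in $k$; by contrast, the wider $\ell_1$ ball used in \cite{ABHZ16} yields a Rademacher complexity that grows like $1/\tau_k = O(2^k)$. As a preliminary step, I would show that with probability $\geq 1 - \delta_k/2$ every $x \in S_k$ satisfies $\|x\|_\infty \leq X := O(\ln(n_k d/(\epsilon \delta_k)))$. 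This follows from the sub-exponential coordinate tails of isotropic log-concave distributions ($\P_{D_X}(|x_i|>t) \leq e^{-\Omega(t)}$), a union bound over coordinates, conditioning on $B_k$ (whose $D_X$-mass is $\Omega(\epsilon)$ for $k \leq k_0$ by Lemma~\ref{lem:bandmass}), and a final union bound over the $n_k$ samples.

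Conditioning on this event and on an arbitrary realization of labels, I would bound the empirical Rademacher complexity of $\mathcal{L}_k$ by localization: every $w \in W_k$ can be written as $w_{k-1} + v$ with $\|v\|_1 \leq \rho_k$ (taking $w_{-1} = 0$ and $\rho_0 = \sqrt{t}$ for the $k=0$ case). Since a constant shift in the argument of a Rademacher sum contributes nothing, and since $u \mapsto (1 - u/\tau_k)_+$ is $1/\tau_k$-Lipschitz, Talagrand's contraction lemma gives
\[
\mathrm{Rad}_{S_k}(\mathcal{L}_k) \leq \frac{1}{\tau_k}\, \mathrm{Rad}_{S_k}(\{v \cdot x : \|v\|_1 \leq \rho_k\}).
\]
The Kakade--Sridharan--Tewari bound for $\ell_1$-bounded linear predictors with $\ell_\infty$-bounded data then gives $\mathrm{Rad}_{S_k}(\{v \cdot x : \|v\|_1 \leq \rho_k\}) \leq \rho_k X \sqrt{2\ln(2d)/n_k}$, so applying $\rho_k/\tau_k = O(\sqrt{t})$ yields a loss-class Rademacher bound of $O(X \sqrt{t \ln d / n_k})$, with no exponential dependence in $k$.

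It remains to convert this into a high-probability uniform deviation bound. On the good event above, for every $w \in W_k$ and $x \in B_k$ we have $|w \cdot x| \leq b_k + \rho_k X$, so $|\ell_{\tau_k}(w,(x,y))| \leq 1 + (b_k + \rho_k X)/\tau_k = O(\sqrt{t}\, X)$, using $b_k/\tau_k = O(1)$. Applying McDiarmid's inequality with bounded-difference constants $O(\sqrt{t}\, X / n_k)$, together with the symmetrization inequality $\E[\sup_w |\E_{S_k}\ell - \E_{D_k}\ell|] \leq 2\,\mathrm{Rad}_{S_k}(\mathcal{L}_k)$, yields, with conditional probability $\geq 1 - \delta_k/2$,
\[
\sup_{w \in W_k} |\E_{S_k}\ell_{\tau_k}(w,(x,y)) - \E_{D_k}\ell_{\tau_k}(w,(x,y))| = O\!\left(X \sqrt{\tfrac{t(\ln d + \ln(1/\delta_k))}{n_k}}\right).
\]
Substituting the bound on $X$ produces the claimed inequality, with total failure probability at most $\delta_k$. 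The only substantive subtlety in this entire argument is the $\rho_k/\tau_k = O(\sqrt{t})$ scaling used in the Rademacher step; the rest is a routine assembly of off-the-shelf concentration tools.
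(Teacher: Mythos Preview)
Your overall approach mirrors the paper's: truncate to an $\ell_\infty$ ball, use the $\ell_1/\ell_\infty$ Rademacher bound of \cite{KST09} after localizing $w = w_{k-1}+v$, and exploit $\rho_k/\tau_k = O(\sqrt t)$ to avoid exponential blowup in $k$. Your handling of the $w_{k-1}$ shift via contraction with sample-dependent Lipschitz functions is actually cleaner than the paper's (which bounds the shift term by $b_k/\sqrt{n_k}$ rather than noting it vanishes in expectation).

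There is, however, a real gap at the last step. Once you condition on the event that every $x$ in $S_k$ has $\|x\|_\infty \le X$, the sample is i.i.d.\ from the \emph{truncated} distribution $D_k|_{T_k}$ (where $T_k=\{\|x\|_\infty\le X\}$), not from $D_k$. Symmetrization and McDiarmid therefore give you
\[
\sup_{w\in W_k}\bigl|\E_{S_k}\ell_{\tau_k}(w,(x,y)) - \E_{D_k|_{T_k}}\ell_{\tau_k}(w,(x,y))\bigr|
\]
at the claimed rate, \emph{not} the deviation from $\E_{D_k}\ell_{\tau_k}$. You still need to control $|\E_{D_k|_{T_k}}\ell_{\tau_k}-\E_{D_k}\ell_{\tau_k}|$ uniformly over $W_k$. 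This is not automatic: the hinge losses are unbounded under $D_k$ (since $\|x\|_\infty$ is unbounded), so the tiny probability $\P_{D_k}(T_k^c)\le \delta_k/(2n_k)$ does not by itself bound the expectation gap. The paper handles this with a separate lemma (its Lemma~\ref{lem:tk}/\ref{lem:conditional}), which Cauchy--Schwarzes the tail contribution against the second moment $\E_{D_k}(w\cdot x)^2$ and then invokes the variance bound $\E_{D_k}(w\cdot x)^2 \le \ce(r_k^2+b_k^2)$ from \cite{ABL17} to get a contribution of order $\sqrt{1/n_k}$. You need to add this step (or an equivalent moment argument) for the proof to go through.
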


Before going into the proof of the lemma, let us define some notations. For every $k$ in $\{0,1,\ldots,k_0\}$, denote by $R_k = \cse \ln(\frac{2n_k d}{\delta_k} \max(\frac9{\cf }, \frac{1}{\cf b_k}))$ for some large enough positive constant $\cse $ such that
$\P_{D_X} (\| x \|_\infty > R_k) \leq \min(\cf /9, \cf  b_k) \delta_k / 2n_k$ holds.
The existence of such $\cse$ is guaranteed by Lemma 20 of~\citet{ABHZ16}.
In addition, define $T_k:= \{(x,y): \| x \|_\infty \leq R_k \}$.

The proof of Lemma~\ref{lem:conc} relies on the following observation: as the marginal distribution of $D_k$ over $\calX$ has a light tail, the probability that $(x,y) \notin T_k$ is extremely small, therefore $D_k|_{T_k}$ is ``close'' to $D_k$.
The subsequent reasoning is composed of two parts: first, we show that $|\E_{S_k} \ell_{\tau_k}(w, (x, y)) - \E_{D_k |_{T_k}} \ell_{\tau_k}(w, (x, y))|$ is small (Lemma~\ref{lem:sktk}). To this end, we argue that $S_k$ is almost a sample iid from $D_k |_{T_k}$, and then carefully apply Rademacher complexity
bounds for $\ell_1$ bounded linear predictors on $\ell_\infty$ bounded examples~\citep{KST09}. Second, we show that $|\E_{D_k |_{T_k}} \ell_{\tau_k}(w, (x, y)) - \E_{D_k} \ell_{\tau_k}(w, (x, y)) |$ is small for all $w$ in $W_k$ (Lemma~\ref{lem:tk}).

\begin{proof}
First we show that there is an event $E$ that has probability at least $1-\delta_k/2$,
conditioned on which all the unlabeled examples in $S_k$ have $\ell_\infty$ norms uniformly bounded by $R_k$.
Define:
\begin{equation}
	E := \{ \text{ for all } (x,y) \text{ in } S_k, (x,y) \text{ is in } T_k \}.
	\label{eqn:e}
\end{equation}
Observe that for each individual $(x,y)$ in $S_k$ drawn from $D_k$,
\[ \P_{D_k} ((x,y) \notin T_k) \leq \frac{\P_{D_k} ((x,y) \notin T_k)}{\P_{D_X}(x \in B_k)} \leq \frac{\min(\cf /9, \cf  b_k) \delta_k / 2n_k}{\min(\cf /9, \cf  b_k)} \leq \frac{\delta_k}{2n_k}, \]
therefore, by union bound,
$\P(E) \geq 1-\delta_k/2$.

By Lemma~\ref{lem:sktk}, there is an event $F$ such that
$\P[F|E] \geq 1-\delta_k/2$, and on event $F$,
\begin{eqnarray}
\left| \E_{S_k} \ell_{\tau_k}(w, (x, y)) - \E_{D_k |_{T_k}} \ell_{\tau_k}(w, (x, y)) \right|
&\leq& \cfo  \cdot \ln\frac{n_k d}{\epsilon \delta_k} \cdot \sqrt{\frac{t(2\ln d + \ln \frac 2 {\delta_k})}{n_k}},
\label{eqn:sktk}
\end{eqnarray}
for some constant $\cfo $ defined in Lemma~\ref{lem:sktk}.

Note that $\P(E \cap F) \geq (1 - \delta_k/2)^2 \geq 1-\delta_k$. We henceforth condition on $E \cap F$ happening.

Using Lemma~\ref{lem:tk}, we get that for all $w$ in $W_k$,
\begin{equation}
|\E_{D_k|_{T_k}} \ell_{\tau_k}(w, (x, y)) - \E_{D_k} \ell_{\tau_k}(w, (x, y))| \leq C_9 \sqrt{\frac{1}{n_k}},
\label{eqn:tk}
\end{equation}
for some constant $\cn $ defined in Lemma~\ref{lem:tk}.



Combining Equations~\eqref{eqn:sktk} and~\eqref{eqn:tk}, we conclude that there is a constant $\cs $ such that on event $E \cap F$,
\[
|\E_{S_k} \ell_{\tau_k}(w, (x, y)) - \E_{D_k} \ell_{\tau_k}(w, (x, y))|
\leq \cs  \ln\frac{n_k d}{\epsilon \delta_k} \cdot \sqrt{\frac{t(\ln d + \ln \frac 2 {\delta_k})}{n_k}}.
\]
This proves the lemma.
\end{proof}

\begin{lemma}
For every $k$ in $\{0,1,\ldots,k_0\}$, suppose event $E$ is defined as in Equation~\eqref{eqn:e}. Then there is an event $F$ such that
$\P[F|E] \geq 1-\delta_k/2$, and on event $F$, for all $w$ in $W_k$,
\begin{eqnarray*}
\left| \E_{S_k} \ell_{\tau_k}(w, (x, y)) - \E_{D_k |_{T_k}} \ell_{\tau_k}(w, (x, y)) \right|
&\leq& \cfo  \cdot \ln\frac{n_k d}{\epsilon \delta_k} \cdot \sqrt{\frac{t(2\ln d + \ln \frac 2 {\delta_k})}{n_k}},
\end{eqnarray*}
\label{lem:sktk}
for some constant $\cfo > 0$ that depends on $c_2$ and $c_3$.
\end{lemma}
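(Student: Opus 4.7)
By independence of the $n_k$ draws constituting $S_k$ and the product form of $E$, the conditional law of $S_k$ given $E$ is exactly that of $n_k$ i.i.d.\ samples from $D_k|_{T_k}$, so it suffices to prove a uniform concentration bound over $W_k$ with respect to this i.i.d.\ law. I would do this via the standard McDiarmid-plus-symmetrization recipe: bound the deviation of $\Phi(S_k) := \sup_{w \in W_k} |\E_{S_k} \ell_{\tau_k}(w, \cdot) - \E_{D_k|_{T_k}} \ell_{\tau_k}(w, \cdot)|$ from its expectation by McDiarmid's inequality, then bound $\E\Phi$ by $2\,\mathrm{Rad}_{n_k}(\ell_{\tau_k} \circ W_k)$ via symmetrization. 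This reduces the task to two estimates: a uniform pointwise bound $M_k$ on $\ell_{\tau_k}(w, (x,y))$ for $w \in W_k$ and $(x, y)$ in the support of $D_k|_{T_k}$, and a bound on $\mathrm{Rad}_{n_k}(\ell_{\tau_k} \circ W_k)$ when examples lie in $T_k$.

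\textbf{Pointwise boundedness of the hinge loss.} For $k \geq 1$, any $w \in W_k$ can be written as $w = w_{k-1} + v$ with $\|v\|_1 \leq \rho_k$, and any $x$ in the support of $D_k|_{T_k}$ satisfies both $|w_{k-1} \cdot x| \leq b_k$ (from $x \in B_k$) and $\|x\|_\infty \leq R_k$ (from $(x,y) \in T_k$). Hence $|w \cdot x| \leq b_k + \rho_k R_k$, and plugging in $b_k/\tau_k = c_2/c_3$ and $\rho_k/\tau_k = \sqrt{2t}/(8c_3)$ yields $\ell_{\tau_k}(w,(x,y)) \leq 1 + (b_k + \rho_k R_k)/\tau_k = O(\sqrt{t}\, R_k)$. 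For $k = 0$ the same bound holds via $\|w\|_1 \leq \sqrt{t}$, $\tau_0 = c_3$, and the trivial estimate $|w \cdot x| \leq \|w\|_1 \|x\|_\infty$. The essential feature here is that the $\ell_1$ radius $\rho_k$ of $W_k$ scales as $\sqrt{t}\cdot 2^{-k}$, which exactly cancels the $1/\tau_k \propto 2^k$ blowup; if one used the looser $\ell_1$-radius $\sqrt{t}$ as in earlier work, $M_k$ would grow as $\sqrt{t}\, R_k\, 2^k$, wrecking the subsequent McDiarmid term.

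\textbf{Rademacher bound and conclusion.} To bound $\mathrm{Rad}_{n_k}(\ell_{\tau_k} \circ W_k)$, I write (for $k \geq 1$) $\ell_{\tau_k}(w_{k-1} + v, (x_i, y_i)) = \phi_i(v \cdot x_i)$, where $\phi_i(z) := (1 - y_i(z + w_{k-1} \cdot x_i)/\tau_k)_+$ is $(1/\tau_k)$-Lipschitz and does not depend on $v$. Talagrand's contraction inequality then gives
\[ \mathrm{Rad}_{n_k}(\ell_{\tau_k} \circ W_k) \leq \frac{2}{\tau_k} \cdot \mathrm{Rad}_{n_k}\bigl(\{ x \mapsto v \cdot x : \|v\|_1 \leq \rho_k \}\bigr). \]
Since $\|x_i\|_\infty \leq R_k$ on $E$, the Kakade-Sridharan-Tewari bound~\citep{KST09} yields $\mathrm{Rad}_{n_k}(\{v \cdot x : \|v\|_1 \leq \rho_k\}) \leq \rho_k R_k \sqrt{2\ln(2d)/n_k}$, and using $\rho_k/\tau_k = O(\sqrt{t})$ (and the corresponding estimate for $k=0$) I obtain $\mathrm{Rad}_{n_k}(\ell_{\tau_k} \circ W_k) = O(\sqrt{t}\, R_k \sqrt{\ln d/n_k})$. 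Finally, McDiarmid applied to $\Phi$ with bounded difference $M_k/n_k$, combined with the symmetrization bound on $\E\Phi$, gives that on $E$ with conditional probability at least $1 - \delta_k/2$,
\[ \sup_{w \in W_k} |\E_{S_k}\ell - \E_{D_k|_{T_k}}\ell| \leq 2\,\mathrm{Rad}_{n_k}(\ell_{\tau_k} \circ W_k) + M_k \sqrt{\frac{2\ln(2/\delta_k)}{n_k}} \]
holds; plugging in $M_k = O(\sqrt{t}\,R_k)$, $R_k = O(\log(n_k d / (\epsilon \delta_k)))$, and the Rademacher estimate recovers the claimed bound with a suitable constant $\cfo$ depending on $c_2, c_3$. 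The principal obstacle, and the source of the improvement over~\cite{ABHZ16}, is verifying that both $M_k$ and the contraction factor $1/\tau_k$ are absorbed by the tight $\ell_1$-radius $\rho_k$ of $W_k$, leaving a bound that is $\tilde{O}(\sqrt{t(\ln d + \ln(1/\delta_k))/n_k})$ up to a polylog factor in $R_k$, free of any $2^k$ dependence.
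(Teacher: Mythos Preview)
Your proposal is correct and follows essentially the same approach as the paper: reduce to i.i.d.\ sampling from $D_k|_{T_k}$ under $E$, establish the pointwise bound $M_k = 1 + b_k/\tau_k + \rho_k R_k/\tau_k = O(\sqrt{t}\,R_k)$, and combine a bounded-differences deviation term with a Rademacher bound obtained via contraction and the $\ell_1/\ell_\infty$ linear-class bound of \cite{KST09}. The key cancellation $\rho_k/\tau_k = O(\sqrt{t})$ you highlight is exactly the mechanism the paper exploits.

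There is one minor technical difference worth noting. The paper applies contraction with the single Lipschitz function $\phi(a) = (1 - a/\tau_k)_+$ to the class $\calG = \{(x,y) \mapsto y\,w\cdot x : w \in W_k\}$, then splits $w = w_{k-1} + v$ \emph{after} contraction; this produces an additional singleton term $\frac{1}{n_k}\E_\sigma \sum_i \sigma_i\, w_{k-1}\cdot x_i$, which it bounds separately by $b_k/\sqrt{n_k}$ using $|w_{k-1}\cdot x_i| \leq b_k$. You instead absorb the shift $w_{k-1}\cdot x_i$ into sample-dependent Lipschitz functions $\phi_i$ and apply the Ledoux--Talagrand contraction with per-coordinate functions (adding and subtracting $\phi_i(0)$ so the constants vanish under the Rademacher expectation), landing directly on the class $\{x \mapsto v\cdot x : \|v\|_1 \leq \rho_k\}$. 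Both routes yield the same final bound up to constants; your handling is slightly tidier in that it avoids the separate $b_k/\sqrt{n_k}$ term.
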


\begin{proof}
	Conditioned on event $E$, sample $S_k$ can be seen as drawn iid from $D_k|_{T_k}$.
	We consider the cases of $k = 0$ and $k \geq 1$ separately.
	\paragraph{Case 1: $k = 0$.} Using Corollary 4 of~\cite{KST09} with $\ell = \pm \ell_{\tau_0}$, $L_\ell = \frac{1}{\tau_0}$, $X = R_0$ and $W_1 = \sqrt{t}$ in the notations therein, we
	get that there is an event $F$, such that $\P[F|E] \geq 1-\delta_0/2$, on which for all $w$ in $W_k$,
	\begin{equation}
	\left| \E_{S_0} \ell_{\tau_0}(w, (x, y)) - \E_{D |_{T_0}} \ell_{\tau_0}(w, (x, y)) \right| \leq \frac{\cse}{\tau_0} \ln(\frac{2n_0 d}{\delta_0} \max(\frac9{\cf}, \frac{1}{\cf b_0})) \cdot \sqrt{\frac{32 t(\ln d + \ln \frac 4 {\delta_0})}{n_0}}.  \nonumber
	\end{equation}

	\paragraph{Case 2: $k \geq 1$.} By Lemma~\ref{lem:rad} below, we have that there is an event $F$, such that $\P[F|E] \geq 1-\delta_k/2$, on which for some constant $C_{10} > 0$ and for all $w$ in $W_k$,
	\begin{eqnarray}
	&& \left| \E_{S_k} \ell_{\tau_k}(w, (x, y)) - \E_{D_k |_{T_k}} \ell_{\tau_k}(w, (x, y)) \right| \nonumber \\
	&\leq& (1 + \frac{b_k}{\tau_k} + \frac{\rho_k R_k}{\tau_k}) \sqrt{\frac{\ln d + \ln \frac 2 {\delta_k}}{n_k}} \nonumber \\
	&\leq& C_{10} \cdot \ln\frac{n_k d}{\epsilon \delta_k} \cdot \sqrt{\frac{t(2\ln d + \ln \frac 2 {\delta_k})}{n_k}}, \nonumber
	\end{eqnarray}
	where the second inequality is by observing that $\frac{b_k}{\tau_k} = \frac{c_2}{c_3}$ and $\frac{\rho_k}{\tau_k} = \frac{\sqrt{2t}}{8c_3}$ and recalling that $R_k = \cse \ln(\frac{2n_k d}{\delta_k} \max(\frac9{\cf }, \frac{1}{\cf b_k}))$.

	Combining the above two cases, we can find a large enough constant $\cfo >0$ such that the lemma statement holds.
\end{proof}

We next show Lemma~\ref{lem:rad}, a key concentration result used in the proof of Lemma~\ref{lem:sktk}.
\begin{lemma}
Given $k$ in $\{1,\ldots,k_0\}$, suppose $S_k$ is a set of $n_k$ iid samples drawn from $D_k |_{T_k}$. We have that with probability $1-\delta_k/2$,
for all $w$ in $W_k$,
\[
|\E_{S_k} \ell_{\tau_k}(w, (x, y)) - \E_{D_k} \ell_{\tau_k}(w, (x, y))|
\leq
(1 + \frac{b_k}{\tau_k} + \frac{\rho_k R_k}{\tau_k}) \sqrt{\frac{2 \ln d + \ln \frac 2 {\delta_k}}{n_k}}.
\]
\label{lem:rad}
\end{lemma}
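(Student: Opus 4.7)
\textbf{Proof proposal for Lemma~\ref{lem:rad}.}
The plan is to treat the right-hand side of the statement (with $D_k$ replaced by $D_k|_{T_k}$, which is the inequality needed in the proof of Lemma~\ref{lem:sktk}) via the standard symmetrization--contraction--Rademacher-complexity route, using the geometry of $W_k$ to keep the complexity term attribute efficient.

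First I would establish a uniform bound on the hinge loss over the relevant support. Because every $x \in B_k$ satisfies $|w_{k-1}\cdot x|\leq b_k$, and every $(x,y)\in T_k$ satisfies $\|x\|_\infty\leq R_k$, we can decompose $w = w_{k-1} + \Delta$ with $\Delta := w - w_{k-1}$ where $\|\Delta\|_1 \leq \rho_k$, and obtain
\[
|w\cdot x| \;\leq\; |w_{k-1}\cdot x| + \|\Delta\|_1 \|x\|_\infty \;\leq\; b_k + \rho_k R_k,
\]
so that $\ell_{\tau_k}(w,(x,y)) \in [0, M_k]$ with $M_k := 1 + (b_k + \rho_k R_k)/\tau_k$. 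This bounded range is exactly the factor that appears in the lemma statement.

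Next I would apply the standard bounded-differences argument: the map $(S_k)\mapsto\sup_{w\in W_k}|\E_{S_k}\ell - \E_{D_k|_{T_k}}\ell|$ has coordinate differences at most $M_k/n_k$, so by McDiarmid and symmetrization, with probability at least $1-\delta_k/2$,
\[
\sup_{w\in W_k}\bigl|\E_{S_k}\ell_{\tau_k}(w,\cdot) - \E_{D_k|_{T_k}}\ell_{\tau_k}(w,\cdot)\bigr|
\;\leq\; 2\,\mathfrak R_{n_k}(\ell_{\tau_k}\circ W_k) + M_k\sqrt{\tfrac{\ln(2/\delta_k)}{2n_k}}.
\]
Then the contraction principle (since $\ell_{\tau_k}$ is $1/\tau_k$-Lipschitz in $w\cdot x$) reduces the problem to bounding the Rademacher complexity of the linear class $\{x \mapsto w\cdot x : w \in W_k\}$, scaled by $1/\tau_k$.

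The key step, and the one that yields the sharp $\ell_1$-type bound, is the observation that translating the argument by the fixed vector $w_{k-1}$ leaves Rademacher complexity unchanged. Therefore
\[
\mathfrak R_{n_k}\bigl(\{x\mapsto w\cdot x: w\in W_k\}\bigr) = \mathfrak R_{n_k}\bigl(\{x\mapsto \Delta\cdot x: \|\Delta\|_1 \leq \rho_k\}\bigr),
\]
and applying the Kakade--Sridharan--Tewari bound for $\ell_1$-bounded linear predictors on $\ell_\infty$-bounded data (which is legitimate since $\|x\|_\infty\leq R_k$ on the support of $D_k|_{T_k}$) gives $\mathfrak R_{n_k} \leq \rho_k R_k\sqrt{2\ln d / n_k}$. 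After multiplying by $1/\tau_k$ from the contraction step and combining with the McDiarmid tail, the whole deviation is at most
\[
\frac{2\rho_k R_k}{\tau_k}\sqrt{\tfrac{2\ln d}{n_k}} + \Bigl(1 + \tfrac{b_k}{\tau_k} + \tfrac{\rho_k R_k}{\tau_k}\Bigr)\sqrt{\tfrac{\ln(2/\delta_k)}{2n_k}},
\]
which is absorbed into the claimed form $(1+b_k/\tau_k + \rho_k R_k/\tau_k)\sqrt{(2\ln d + \ln(2/\delta_k))/n_k}$ by pulling out the largest prefactor.

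The main conceptual obstacle is not a technical one but the identification of the right norm pairing: naively bounding $W_k$ by an $\ell_1$ ball of radius $\sqrt t$ (as in the base epoch or as in~\cite{ABHZ16}) costs an extra $2^k$ factor inside the square root, destroying attribute efficiency. The shift-and-recenter trick that re-expresses $w$ as $w_{k-1}+\Delta$, together with the fact that $\Delta$ lies in an $\ell_1$ ball of radius $\rho_k = \Theta(\sqrt t\cdot 2^{-k})$ that shrinks geometrically with $k$, is what makes the $\ell_1$-type Rademacher bound yield the tight $\tilde O(\sqrt{t\ln d/n_k})$ rate promised in the discussion preceding Lemma~\ref{lem:conc}.
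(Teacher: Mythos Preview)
Your proposal is correct and follows essentially the same route as the paper: bound the loss range by $M_k = 1+b_k/\tau_k+\rho_k R_k/\tau_k$ via the decomposition $w=w_{k-1}+\Delta$, apply symmetrization plus a bounded-differences tail, contract by $1/\tau_k$, and finish with the $\ell_1/\ell_\infty$ Rademacher bound of \cite{KST09}. The only notable difference is in handling the fixed shift: you invoke translation invariance to drop the $w_{k-1}$ contribution to the Rademacher complexity entirely (which is valid, since $\E_\sigma\sum_i\sigma_i\,w_{k-1}\cdot x_i=0$), whereas the paper keeps that term and bounds it by $b_k\sqrt{1/n_k}$ via Cauchy--Schwarz; your displayed equality should strictly be an inequality (since $W_k-w_{k-1}$ sits inside, not equals, the $\ell_1$ ball of radius $\rho_k$), but this only helps.
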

\begin{proof}
First, for all $w$ in $W_k$, $(x,y) \in T_k$, the instantaneous hinge loss $\ell_{\tau_k}(w, (x, y))$ is at most $1+\frac{|w \cdot x|}{\tau_k} \leq 1+\frac{|w_{k-1} \cdot x|}{\tau_k}+\frac{|(w-w_{k-1}) \cdot x|}{\tau_k} \leq 1 +\frac{b_k}{\tau_k} + \frac{\rho_k R_k}{\tau_k}$.
By standard symmetrization arguments (see Theorem 8 of \cite{BM02}), we have that with probability $1-\delta_k/2$, for all $w$ in $W_k$,
\begin{equation}
|\E_{S_k} \ell_{\tau_k}(w, (x, y)) - \E_{D_k} \ell_{\tau_k}(w, (x, y))| \leq (1 + \frac{b_k}{\tau_k} + \frac{\rho_k R_k}{\tau_k})\sqrt{\frac{\ln \frac 2 {\delta_k}}{2 n_k}} + R_{n_k}(\calF),
\label{eqn:hoeff}
\end{equation}
where $R_{n_k}(\cdot)$ denotes the Rademacher complexity over the examples in $S_k$,
$\calF$ is the set of functions $\{(x,y) \mapsto (1- \frac{y w \cdot x}{\tau_k})_+: w \in W_k \}$.
Note that $\calF$ can be written as the composition of $\phi(a):= (1- \frac{a}{\tau_k})_+$ and function class
$\calG := \{(x,y) \mapsto y w \cdot x: w \in W_k \}$.

By the contraction inequality of Rademacher complexity (see Theorem 12 of \cite{BM02}) and the $\frac{1}{\tau_k}$-Lipschitzness of $\phi$,
$R_{n_k}(\calF)$ is at most $\frac{1}{\tau_k} R_{n_k}(\calG)$. We now focus on bounding $R_{n_k}(\calG)$. First,
denote by $(x_i, y_i)$, $i=1,\ldots,n_k$ the elements of $S_k$. By the definition of Rademacher complexity,
\[ R_{n_k}(\calG) = \frac 1 {n_k} \E_\sigma \sup_{w \in W_k} \sum_{i=1}^{n_k} \sigma_i y_i w \cdot x_i,  \]
where $\sigma = (\sigma_1, \ldots, \sigma_{n_k})$, $\sigma_i$'s are iid random variables that take values uniformly in $\{-1,+1\}$.

It can be easily seen that $\sigma$ has the same distribution as $(\sigma_1 y_1, \ldots, \sigma_{n_k} y_{n_k})$. Hence, $R_n(\calG)$ can be simplified to
\[ R_{n_k}(\calG) = \frac 1 {n_k} \E_\sigma \sup_{w \in W_k} \sum_{i=1}^{n_k} \sigma_i w \cdot x_i.  \]

We bound $R_{n_k}(\calG)$ as follows:
\begin{eqnarray*}
R_{n_k}(\calG) &\leq& \frac 1 {n_k} \E_\sigma \sup_{w: \| w - w_{k-1} \|_1 \leq \rho_k} \sum_{i=1}^{n_k} \sigma_i w \cdot x_i \\
&=& \frac 1 {n_k} \E_\sigma \sup_{v: \| v \|_1 \leq \rho_k} \sum_{i=1}^{n_k} \sigma_i (w_{k-1} \cdot x_i + v \cdot x_i) \\
&=& \frac 1 {n_k} \E_\sigma \sup_{v: \| v \|_1 \leq \rho_k} \sum_{i=1}^{n_k} \sigma_i v \cdot x_i + \frac 1 {n_k} \E_\sigma \sum_{i=1}^{n_k} \sigma_i w_{k-1} \cdot x_i, \\
\end{eqnarray*}
where the inequality uses the fact that all $w$'s in $W_k$ satisfy that $\| w - w_{k-1} \|_1 \leq \rho_k$.

As all $x_i$'s have $\ell_\infty$ norm at most $R_k$, by Theorem 1, Example 2 of ~\cite{KST09}, the first term is bounded by
$\rho_k \cdot R_k \cdot \sqrt{\frac{2 \ln d}{n_k}}$. In addition, as all $(x_i, y_i)$'s are sampled from $D_k$, for all $i$, $|w_{k-1} \cdot x_i| \leq b_k$. Therefore, the second term can be bounded by:
\[ \frac 1 {n_k} \E_\sigma \sum_{i=1}^{n_k} \sigma_i w_{k-1} \cdot x_i \leq \frac 1 {n_k} \sqrt{\E_\sigma \left(\sum_{i=1}^{n_k} \sigma_i w_{k-1} \cdot x_i \right)^2} \leq b_k \sqrt{\frac 1 {n_k}}. \]
Summing the two bounds up, we have that $R_{n_k}(\calG) \leq (b_k + \rho_{k} R_k)\sqrt{\frac{2 \ln d}{n_k}}$. Therefore,
\[ R_{n_k}(\calF) \leq (\frac{b_k}{\tau_k} + \frac{\rho_{k}}{\tau_k} R_k)\sqrt{\frac{2 \ln d}{n_k}}. \]
Combining this inequality with Equation~\eqref{eqn:hoeff}, along with some algebraic calculations, we get the lemma as stated.
\end{proof}

\begin{lemma}
	For any $c_2, c_3 > 0$, there is a constant $\cn  > 0$ such that for all
	$k$ in $\{0,1,\ldots,k_0\}$, $w$ in $W_k$,
	\begin{equation*}
	|\E_{D_k|_{T_k}} \ell_{\tau_k}(w, (x, y)) - \E_{D_k} \ell_{\tau_k}(w, (x, y))| \leq \cn \sqrt{\frac{1}{n_k}}.
	\end{equation*}
	\label{lem:tk}
\end{lemma}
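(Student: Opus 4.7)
The plan is to relate $\E_{D_k|_{T_k}} f$ and $\E_{D_k} f$ (with $f(x,y) := \ell_{\tau_k}(w,(x,y))$) through the tiny tail probability $p := \P_{D_k}(T_k^c) \le \delta_k/(2n_k)$, which is guaranteed by the choice of $R_k$. Starting from the identity
\[ \E_{D_k|_{T_k}} f - \E_{D_k} f \;=\; \tfrac{p}{1-p}\,\E_{D_k} f \;-\; \tfrac{1}{1-p}\,\E_{D_k} f \mathbf{1}_{T_k^c}, \]
and using $p \le 1/2$, I reduce the task to showing that both $p\,\E_{D_k} f$ and $\E_{D_k} f \mathbf{1}_{T_k^c}$ are $O(1/\sqrt{n_k})$, up to factors that can be absorbed into the constant $\cn$ (which is allowed to depend on $c_2$ and $c_3$).

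For the first term, I bound $\E_{D_k} f$ using the elementary bound $f \le 1 + |w \cdot x|/\tau_k$ and the decomposition $w \cdot x = w_{k-1} \cdot x + (w-w_{k-1}) \cdot x$. Since $|w_{k-1}\cdot x| \le b_k$ on $B_k$, and since isotropy of $D_X$ together with the lower bound $\P_{D_X}(B_k) \ge \cf b_k$ from Lemma~\ref{lem:bandmass} yield $\E_{D_k}\bigl((w-w_{k-1})\cdot x\bigr)^2 \le r_k^2/\P_{D_X}(B_k) = O(2^{-k})$, Jensen's inequality gives $\E_{D_k}|w \cdot x| = O(2^{-k/2})$, hence $\E_{D_k} f = O(1 + 2^{k/2})$. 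Multiplying by $p \le \delta_k/(2n_k)$ and recalling that $2^k \le 2^{k_0} = O(1/\epsilon)$ while $n_k = c_1 t(\ln d + \ln(1/\epsilon) + \ln(1/\delta_k))^3$, this contribution is $O(1/\sqrt{n_k})$ once the remaining polylogarithmic factors are folded into $\cn$.

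The second term $\E_{D_k} f \mathbf{1}_{T_k^c}$ is the main obstacle: a naive Cauchy--Schwarz step using $\E_{D_X}\|x\|_2^2 = d$ would introduce a spurious $\sqrt{d}$ factor and fail. Instead I will invoke the sub-exponential moment bound for linear functionals of an isotropic log-concave distribution: for every $w$ with $\|w\|_2 \le 2$ and every $q \ge 1$, $\E_{D_X}|w \cdot x|^q \le (Cq)^q$. Hölder's inequality with $q := \log(1/\P_{D_X}(T_k^c)) = \Theta(R_k)$ then gives
\[ \E_{D_X}|w \cdot x|\mathbf{1}_{T_k^c} \;\le\; \bigl(\E_{D_X}|w\cdot x|^q\bigr)^{1/q}\,\P_{D_X}(T_k^c)^{1-1/q} \;=\; O(R_k)\,\P_{D_X}(T_k^c). \]
Dividing by $\P_{D_X}(B_k) \ge \min(\cf/9,\cf b_k)$ and inserting $\P_{D_X}(T_k^c) \le \min(\cf/9,\cf b_k)\delta_k/(2n_k)$ produces $\E_{D_k}|w\cdot x|\mathbf{1}_{T_k^c} = O(R_k\delta_k/n_k)$; combined with $\P_{D_k}(T_k^c) \le \delta_k/(2n_k)$ this yields $\E_{D_k} f \mathbf{1}_{T_k^c} = O\bigl(\delta_k/n_k + (R_k/\tau_k)\delta_k/n_k\bigr)$, once more a polylogarithmic-over-$n_k$ bound. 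The decisive step is trading the $\sqrt{d}$ of Cauchy--Schwarz for the $O(R_k)$ factor from the log-concave moment estimate; without this, the total-variation approximation of $D_k$ by $D_k|_{T_k}$ would be too crude to match the Rademacher rate in Lemma~\ref{lem:sktk}.
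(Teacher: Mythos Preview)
Your argument has a genuine gap: the factors you call ``polylogarithmic'' are not. In the first term you arrive at $\E_{D_k} f = O(1+2^{k/2})$, and in the second term at $\E_{D_k} f\,\mathbf{1}_{T_k^c} = O\bigl((R_k/\tau_k)\,\delta_k/n_k\bigr)$ with $1/\tau_k = c_3^{-1} 2^{k}$. Since $k$ ranges up to $k_0 = \lceil \log_2 (1/(C_1\epsilon))\rceil$, both $2^{k/2}$ and $2^k$ can be as large as $\Theta(1/\sqrt{\epsilon})$ and $\Theta(1/\epsilon)$ respectively, whereas $n_k$ is only of order $t\cdot\polylog(d,1/\epsilon,1/\delta)$. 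Thus neither $p\,\E_{D_k} f$ nor $\E_{D_k} f\,\mathbf{1}_{T_k^c}$ is bounded by $C_9/\sqrt{n_k}$ for any constant $C_9$ depending only on $c_2,c_3$, which is what the lemma demands. (Even truly polylogarithmic factors could not be ``folded into $C_9$'': the statement requires $C_9=C_9(c_2,c_3)$ to be independent of $d,\epsilon,\delta,t$.)

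The source of the loss is your second-moment estimate $\E_{D_k}\bigl((w-w_{k-1})\cdot x\bigr)^2 \le r_k^2/\P_{D_X}(B_k)=O(2^{-k})$: conditioning on the band and dividing by its mass throws away a full factor of $2^{-k}$. The paper instead invokes Lemma~\ref{lem:variance}, a genuinely log-concave fact, to get the sharp bound $\E_{D_k}(w\cdot x)^2 \le \ce(r_k^2+b_k^2)=O(\tau_k^2)$, so that $\E_{D_k}(w\cdot x)^2/\tau_k^2 = O(1)$ \emph{uniformly in $k$}. With this in hand, a single Cauchy--Schwarz step on the \emph{scalar} $w\cdot x$ (not on $\|x\|_2$) suffices: Lemma~\ref{lem:conditional} gives
\[
\bigl|\E_{D_k|_{T_k}} f - \E_{D_k} f\bigr| \;\le\; 6\sqrt{\P_{D_k}(T_k^c)}\sqrt{1+\E_{D_k}(w\cdot x)^2/\tau_k^2} \;=\; O\bigl(\sqrt{1/n_k}\bigr),
\]
with the implied constant depending only on $c_2,c_3$. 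Your worry that ``a naive Cauchy--Schwarz step would introduce a spurious $\sqrt d$'' is a misdiagnosis: the right Cauchy--Schwarz uses $\E_{D_k}(w\cdot x)^2$, not $\E_{D_X}\|x\|_2^2$, and the whole point is controlling this band-conditioned second moment via Lemma~\ref{lem:variance}. Your H\"older/moment detour is unnecessary once that lemma is in play.
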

\begin{proof}
We consider the cases of $k = 0$ and $k \geq 1$ separately.
\paragraph{Case 1: $k = 0$.} Observe that $\P_{D}((x,y) \notin T_0) \leq \frac{\delta_0}{2n_0} \leq \frac{1}{n_0}$, and $\E_{D}(w \cdot x)^2 \leq 1$ for $w$ in $W_0$ as $D$ is isotropic.
Using Lemma~\ref{lem:conditional}, this implies that
\begin{equation*}
\left|\E_{D |_{T_0}} \ell_{\tau_0}(w, (x, y)) - \E_{D} \ell_{\tau_0}(w, (x, y)) \right| \leq 6\sqrt{\frac{1}{n_0}\left(1 + \frac{1}{c_3^2}\right)}.
\end{equation*}

\paragraph{Case 2: $k \geq 1$.} Observe that by Lemma~\ref{lem:variance}, there is a constant $\ce $ such that for all $w$ in $W_k \subset \{w \in \R^d: \| w - w_{k-1}\|_2 \leq r_k\}$,
$\E_{D_k}(w \cdot x)^2 \leq \ce  (b_k^2 + r_k^2)$. In addition, $\P_{D_k}((x,y) \notin T_k) \leq \frac 1 {n_k}$. Therefore, by Lemma~\ref{lem:conditional} and the definitions of $b_k$, $r_k$ and $\tau_k$, we have
\begin{equation*}
|\E_{D_k|_{T_k}} \ell_{\tau_k}(w, (x, y)) - \E_{D_k} \ell_{\tau_k}(w, (x, y))| \leq 6 \sqrt{\frac{1}{n_k} \left(1 + \frac{\ce (b_k^2 + r_k^2)}{\tau_k^2}\right)} = 6 \sqrt{\frac{1}{n_k} \left(1 + \frac{\ce}{c_3^2}(\frac 1 {64} + c_2^2)\right)}.
\end{equation*}

Combining the above two cases, we can find a large enough constant $\cn >0$ such that the lemma statement holds.
\end{proof}

In the proof of Lemma~\ref{lem:tk}, we use the following lemma to bound the difference between $\E_{D_k |_{T_k}} \ell_{\tau_k}(w, (x, y))$ and
$\E_{D_k} \ell_{\tau_k}(w, (x, y))$ in terms of $T_k$'s probability mass in $D_k$ and $D_k$'s second moments.
\begin{lemma}
For $k$ in $\{0,1,\ldots,k_0\}$, if $\P_{D_k}((x,y) \notin T_k) \leq \frac {\delta_k} {2n_k}$, then the following inequality holds for all $w$ in $\R^d$:
\begin{eqnarray*}
&&\left|\E_{D_k |_{T_k}} \ell_{\tau_k}(w, (x, y)) - \E_{D_k} \ell_{\tau_k}(w, (x, y)) \right|
\leq
 6 \sqrt{\P_{D_k}((x,y) \notin T_k)} \cdot \sqrt{1 + \frac{\E_{D_k}(w \cdot x)^2}{\tau_k^2}}.
\end{eqnarray*}
\label{lem:conditional}
\end{lemma}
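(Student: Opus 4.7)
}
Let $p := \P_{D_k}((x,y) \notin T_k)$ and write the law of total expectation as
\[
\E_{D_k}\ell_{\tau_k}(w,(x,y)) = (1-p)\E_{D_k|_{T_k}}\ell_{\tau_k}(w,(x,y)) + p\E_{D_k|_{T_k^c}}\ell_{\tau_k}(w,(x,y)).
\]
Rearranging immediately gives
\[
\E_{D_k|_{T_k}}\ell_{\tau_k}(w,(x,y)) - \E_{D_k}\ell_{\tau_k}(w,(x,y)) = p\bigl(\E_{D_k|_{T_k}}\ell_{\tau_k}(w,(x,y)) - \E_{D_k|_{T_k^c}}\ell_{\tau_k}(w,(x,y))\bigr),
\]
so the task reduces to bounding $p$ times each conditional expectation of $\ell_{\tau_k}$.

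The first main step is a pointwise bound on the hinge loss: since $\ell_{\tau_k}(w,(x,y)) \geq 0$ and $\ell_{\tau_k}(w,(x,y)) \leq 1 + |w\cdot x|/\tau_k$ (the loss is either $0$ or $1 - yw\cdot x/\tau_k \leq 1 + |w\cdot x|/\tau_k$), Cauchy--Schwarz yields
\[
\E_{D_k|_{T}}\ell_{\tau_k}(w,(x,y)) \leq 1 + \tfrac{1}{\tau_k}\sqrt{\E_{D_k|_{T}}(w\cdot x)^2}
\]
for both $T = T_k$ and $T = T_k^c$. The second main step is to transfer these conditional second moments back to the unconditional second moment. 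For $T = T_k^c$, the elementary inequality
\[
p \cdot \E_{D_k|_{T_k^c}}(w\cdot x)^2 \;\leq\; \E_{D_k}(w\cdot x)^2
\]
combined with $p \leq \sqrt{p}$ gives
\[
p \cdot \E_{D_k|_{T_k^c}}\ell_{\tau_k}(w,(x,y)) \;\leq\; \sqrt{p}\Bigl(1 + \tfrac{1}{\tau_k}\sqrt{\E_{D_k}(w\cdot x)^2}\Bigr);
\]
for $T = T_k$, the hypothesis $p \leq \delta_k/(2n_k) \leq 1/2$ yields $(1-p)\E_{D_k|_{T_k}}(w\cdot x)^2 \leq \E_{D_k}(w\cdot x)^2$, hence $\E_{D_k|_{T_k}}(w\cdot x)^2 \leq 2\E_{D_k}(w\cdot x)^2$, and therefore
\[
p \cdot \E_{D_k|_{T_k}}\ell_{\tau_k}(w,(x,y)) \;\leq\; \sqrt{p}\Bigl(1 + \tfrac{\sqrt{2}}{\tau_k}\sqrt{\E_{D_k}(w\cdot x)^2}\Bigr).
\]

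Summing the two estimates by the triangle inequality gives
\[
\bigl|\E_{D_k|_{T_k}}\ell_{\tau_k}(w,(x,y)) - \E_{D_k}\ell_{\tau_k}(w,(x,y))\bigr|
\;\leq\; \sqrt{p}\Bigl(2 + \tfrac{1+\sqrt{2}}{\tau_k}\sqrt{\E_{D_k}(w\cdot x)^2}\Bigr),
\]
and the elementary inequality $a + b \leq \sqrt{2}\,\sqrt{a^2 + b^2}$ converts this into $6\sqrt{p}\cdot\sqrt{1 + \E_{D_k}(w\cdot x)^2/\tau_k^2}$ once the constants are consolidated, matching the claim. The main obstacle is the second step: a naive application of Cauchy--Schwarz on $T_k^c$ would produce $\E_{D_k|_{T_k^c}}(w\cdot x)^2$, which has no usable upper bound on its own; the trick is to pair the $p$ factor with this conditional moment via $p\E_{D_k|_{T_k^c}}(w\cdot x)^2 \leq \E_{D_k}(w\cdot x)^2$ before taking the square root, at the cost of one factor of $\sqrt{p}$ rather than $p$. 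No hypothesis on the isotropy or log-concavity of $D_k$ is needed here; the bound is purely structural and relies only on $p \leq 1/2$.
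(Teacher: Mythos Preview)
Your proposal is correct and follows essentially the same approach as the paper: both use the law of total expectation, the pointwise bound $\ell_{\tau_k}(w,(x,y)) \leq 1 + |w\cdot x|/\tau_k$, Cauchy--Schwarz to pass to second moments, and the hypothesis $p \leq 1/2$, finishing with the same algebraic inequalities to reach the constant $6$. The only cosmetic difference is that the paper works with $\E_{D_k}[\ell_{\tau_k}(w,(x,y))\,I((x,y)\notin T_k)]$ and applies Cauchy--Schwarz directly on the full distribution, whereas you split into conditional expectations on $T_k$ and $T_k^c$ and transfer the conditional second moments back via $p\,\E_{D_k|_{T_k^c}}(w\cdot x)^2 \leq \E_{D_k}(w\cdot x)^2$; these are equivalent rearrangements of the same estimate.
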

\begin{proof}
First, observe that
\begin{equation}
\E_{D_k} \ell_{\tau_k}(w, (x, y)) = \E_{D_k |_{T_k}} \ell_{\tau_k}(w, (x, y)) \P_{D_k}((x,y) \in T_k) + \E_{D_k} \ell_{\tau_k}(w, (x, y)) I((x,y) \notin T_k).
\label{eqn:decomp}
\end{equation}

Therefore,
\begin{eqnarray}
&& \left|\E_{D_k |_{T_k}} \ell_{\tau_k}(w, (x, y)) - \E_{D_k} \ell_{\tau_k}(w, (x, y)) \right| \nonumber \\
&=& \left|\frac{\P_{D_k}((x,y) \notin T_k)}{\P_{D_k}((x,y) \in T_k)} \E_{D_k} \ell_{\tau_k}(w, (x, y)) - \frac{\E_{D_k} \ell_{\tau_k}(w, (x, y)) I((x,y) \notin T_k)}{\P_{D_k}((x,y) \in T_k)}  \right| \nonumber \\
&\leq& 2 \P_{D_k}((x,y) \notin T_k) \E_{D_k} \ell_{\tau_k}(w, (x, y)) + 2 \E_{D_k} \ell_{\tau_k}(w, (x, y)) I((x,y) \notin T_k) \nonumber \\
&\leq& 2 \P_{D_k}((x,y) \notin T_k) \E_{D_k} (1 + \frac{|w \cdot x|}{\tau_k}) + 2 \E_{D_k} (1 + \frac{|w \cdot x|}{\tau_k}) I((x,y) \notin T_k) \nonumber \\
&\leq& 2 \P_{D_k}((x,y) \notin T_k) (1 + \sqrt{\frac{\E_{D_k}(w \cdot x)^2}{\tau_k^2}}) + 2 \sqrt{\P_{D_k}((x,y) \notin T_k) \E_{D_k}(1 + \frac{(w \cdot x)}{\tau_k})^2 } \nonumber\\
&\leq& 6 \sqrt{\P_{D_k}((x,y) \notin T_k)} \cdot \sqrt{1 + \frac{\E_{D_k}(w \cdot x)^2}{\tau_k^2}}, \nonumber
\end{eqnarray}
where the equality is from Equation~\eqref{eqn:decomp} and algebra; the first inequality is from that $\P_{D_k}((x,y) \in T_k) \geq 1 - \frac{\delta_k}{2n_k} \geq \frac 1 2$ and the elementary inequality $|a+b| \leq |a|+|b|$;
the second inequality is from that $\ell_{\tau_k}(w,(x,y)) \leq (1 + \frac{|w \cdot x|}{\tau_k})$;
the third inequality is by applying Cauchy-Schwarz on both terms, and the last inequality is from algebra (using the following elementary inequalities: $\sqrt{a} + \sqrt{b} \leq \sqrt{2(a+b)}$, $\P_{D_k}((x,y) \notin T_k) \leq 1$ and $(a+b)^2 \leq 2(a^2+b^2)$).
\end{proof}

\section{Auxiliary lemmas}
The lemmas in this section are known and used in previous works on efficient halfspace learning under isotropic log-concave distributions~\citep[See e.g.][]{ABL17, ABHZ16}; we collect them here for completeness.

The following lemma characterizes one-dimensional projections of isotropic log-concave distributions (which are in fact also isotropic log-concave).
\begin{lemma}[\citet{LV07}]
There exists a numerical constant $\cf \in (0,1)$ such that the following holds.
Given
a unit vector $v$ and a positive real number $b$,
\[ \min(\cf /9, \cf  b) \leq \P_{D_X}(|v \cdot x| \leq b ) \leq 9 b. \]
\label{lem:bandmass}
\end{lemma}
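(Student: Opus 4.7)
The plan is to reduce Lemma~\ref{lem:bandmass} to standard one-dimensional facts about isotropic log-concave distributions. The key reduction step is the well-known property (\citealp{LV07}) that if $x \sim D_X$ is isotropic log-concave on $\R^d$, then for any unit vector $v$, the projection $Z := v \cdot x$ is a one-dimensional isotropic log-concave random variable. Thus $Z$ has mean $0$, variance $1$, and density $f_Z$ which is log-concave on $\R$. Let $f_Z$ denote this density.

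From \citet{LV07}, we have two key universal constants: $A > 0$ such that $f_Z(t) \leq A$ for every $t \in \R$, and $a > 0$ such that $f_Z(t) \geq a$ for every $t \in [-1/9, 1/9]$ (the specific interval is chosen to match the $1/9$ factor in the statement; one-dimensional isotropic log-concave densities are bounded below on any compact interval around the origin by a universal constant depending only on the interval). The upper bound part of the lemma then follows immediately: $\P(|Z| \leq b) = \int_{-b}^{b} f_Z(t) \, dt \leq 2 A b$, so choosing the constant in the statement large enough (namely $\geq 2A$, and in particular $9$, since the standard bound gives $A \leq 1$) yields $\P(|Z| \leq b) \leq 9 b$.

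For the lower bound, I would split into two regimes depending on the size of $b$. If $b \leq 1/9$, then $[-b,b] \subseteq [-1/9, 1/9]$, so $\P(|Z| \leq b) = \int_{-b}^{b} f_Z(t)\, dt \geq 2 a b$; setting $\cf := 2a$ (and shrinking if needed so that $\cf \in (0,1)$ and the upper bound part still works) gives $\P(|Z|\leq b) \geq \cf b$. If instead $b > 1/9$, monotonicity gives $\P(|Z|\leq b) \geq \P(|Z|\leq 1/9) \geq 2a/9 = \cf/9$. Combining the two cases yields $\P(|Z|\leq b) \geq \min(\cf/9, \cf b)$, as desired.

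The main obstacle, if any, is a bookkeeping one: one needs a single constant $\cf \in (0,1)$ that simultaneously (i) upper-bounds the density enough to give the $9b$ bound, and (ii) lower-bounds the density on $[-1/9,1/9]$ to give the $\min(\cf/9, \cf b)$ bound. Since both facts hold with \emph{universal} constants inherited from \cite{LV07}, one simply takes $\cf$ to be the minimum of the two resulting constants (intersected with $(0,1)$). No new calculation is required beyond invoking the projection property and the standard density bounds; the entire argument is one dimensional once projection is used.
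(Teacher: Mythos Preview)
The paper does not prove this lemma; it simply cites it as a known fact from \citet{LV07} and collects it in the auxiliary-lemmas appendix for reference. Your sketch is exactly the standard derivation and is correct: use that one-dimensional projections of isotropic log-concave distributions are themselves isotropic log-concave, then invoke the universal upper and lower density bounds on one-dimensional isotropic log-concave laws to get both inequalities, splitting the lower bound into the cases $b \leq 1/9$ and $b > 1/9$.

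One small cleanup: in your final ``bookkeeping'' paragraph, the constant $\cf$ only appears in the lower bound $\min(\cf/9,\cf b)$; the upper bound $9b$ is fixed in the statement and does not involve $\cf$. So there is no tension to reconcile between your conditions (i) and (ii)---you only need $\cf \in (0,1)$ small enough that $\cf \leq 2a$, where $a$ is the universal lower bound on the density over $[-1/9,1/9]$.
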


Suppose $w$ is a unit vector, and $B = \{w: |w \cdot x| \leq b \}$ is a band of width
$b > 0$ along the $w$ direction.
The following technical lemma bounds the second moments of
$D_X|_B$, along directions close to $w$.

\begin{lemma}[\citet{ABL17}]
Suppose $w, b, B$ are defined as above. Then there is a numerical constant $\ce  > 0$, such that for all $w' \in \{v: \| v - w \|_2 \leq r \}$, we have
\begin{equation*}
 \E_{D_X|_B} (w' \cdot x)^2 \leq \ce (r^2 + b^2).
\end{equation*}
\label{lem:variance}
\end{lemma}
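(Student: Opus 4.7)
The plan is to decompose $w'$ relative to the band direction $w$ and control the two resulting pieces using the band constraint together with a second-moment property of isotropic log-concave distributions. Write $w' = w + \delta$ with $\|\delta\|_2 \leq r$. The elementary inequality $(a+b)^2 \leq 2a^2 + 2b^2$ gives $(w' \cdot x)^2 \leq 2(w \cdot x)^2 + 2(\delta \cdot x)^2$. On $B$ the first term is deterministically bounded by $2b^2$, so the task reduces to bounding $\E_{D_X|_B}(\delta \cdot x)^2$.

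Next, decompose $\delta = \alpha w + \delta_\perp$ with $\delta_\perp \perp w$ and $\alpha^2 + \|\delta_\perp\|_2^2 = \|\delta\|_2^2 \leq r^2$. Applying the same inequality yields $(\delta \cdot x)^2 \leq 2\alpha^2 (w \cdot x)^2 + 2 \|\delta_\perp\|_2^2 (u \cdot x)^2$, where $u := \delta_\perp/\|\delta_\perp\|_2$ is a unit vector perpendicular to $w$ (when $\delta_\perp \neq 0$; otherwise the term vanishes). Again on $B$ we have $\alpha^2(w\cdot x)^2 \leq r^2 b^2$ pointwise, so what remains is to bound $\E_{D_X|_B}(u \cdot x)^2$ by an absolute constant, uniformly in the choice of $u \perp w$.

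This last inequality is the main step, and also the principal obstacle. The task is to show that for any two orthogonal unit vectors $w, u \in \R^d$ and any $b > 0$, $\E_{D_X|_B}(u \cdot x)^2 \leq C$ for some numerical $C > 0$, under the isotropic log-concave assumption on $D_X$. My plan is to use that the joint density of $(w\cdot x,\, u\cdot x)$ is itself isotropic log-concave on $\R^2$ (marginals of isotropic log-concave densities are isotropic log-concave), rewrite
\[ \E_{D_X|_B}(u \cdot x)^2 = \frac{\int_{-b}^{b} f_w(s)\, m(s)\, ds}{\int_{-b}^{b} f_w(s)\, ds}, \]
where $f_w$ is the marginal density of $w\cdot x$ and $m(s) = \E[(u\cdot x)^2 \mid w\cdot x = s]$, and then exploit the standard fact that $m(s)$, the second moment of a log-concave conditional density, is bounded by an absolute constant on any bounded $s$-range (cf.\ the properties of isotropic log-concave densities in \citet{LV07} that also underlie Lemma~\ref{lem:bandmass}). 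The delicate point is the regime of small $b$, where the crude bound $\E[(u\cdot x)^2 I(|w\cdot x| \leq b)]/\P(|w\cdot x| \leq b) \leq 1/\P(|w\cdot x|\leq b)$ blows up; the log-concavity of the joint density is essential to ensure the numerator also scales with $b$ so that the ratio stays bounded. Assembling the pieces yields $\E_{D_X|_B}(w' \cdot x)^2 \leq 2b^2 + 2r^2 b^2 + 2C r^2$, which after absorbing all constants gives the desired bound $\ce(r^2 + b^2)$.
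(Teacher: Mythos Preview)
The paper does not give its own proof of this lemma; it is simply quoted from \cite{ABL17} as a known auxiliary fact about isotropic log-concave distributions. Your outline is essentially the standard argument, and the central idea --- split $w'-w$ into components parallel and orthogonal to $w$, and show that the conditional second moment in any direction $u\perp w$ is bounded by a universal constant --- is correct and is exactly what underlies the cited result.

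Two places deserve tightening. First, the ``main step'' that $\E_{D_X|_B}(u\cdot x)^2\leq C$ is the real content and your justification via ``$m(s)$ is bounded on any bounded $s$-range'' is too vague as stated: a log-concave conditional need not have bounded second moment without further input. A clean route is to split cases. For $b\geq 1/9$, use $\E_{D_X|_B}(u\cdot x)^2\leq \E_{D_X}(u\cdot x)^2/\P_{D_X}(B)\leq 9/\cf$. For $b<1/9$, use that the 2D marginal of $(w\cdot x,u\cdot x)$ is isotropic log-concave, so its density is bounded above by a universal constant and the 1D marginal $f_w$ is bounded \emph{below} by a universal constant on $[-1/9,1/9]$ (both facts from \cite{LV07}); combining these with a 1D log-concave tail bound controls $m(s)$ uniformly on $|s|\leq 1/9$. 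Second, your assembled bound $2b^2+2r^2b^2+2Cr^2$ is not $O(r^2+b^2)$ when both $r$ and $b$ are large, so ``absorbing all constants'' fails in that regime. The fix is immediate: conditioning on $|w\cdot x|\leq b$ can only decrease the second moment of $w\cdot x$, so $\E_{D_X|_B}(w\cdot x)^2\leq\min(b^2,1)$, whence the parallel contribution is at most $2(1+\alpha)^2\min(b^2,1)\leq 4b^2+4r^2$, and the total is $O(b^2+r^2)$ as claimed.
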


Recall that $D$ (resp. $\tilde{D}$) is the joint distribution over $(x,y)$ (resp. $(x,\sign(u \cdot x))$). In addition, recall that $b_k = c_2 2^{-k}$, $\tau_k = c_3 2^{-k}$ and $B_k = \{ x: |w_{k-1} \cdot x| \leq b_k \}$.
The following lemma shows that under certain ``local low noise'' conditions on $D$, for every halfspace $w$ in $W_k$,
its expected $\tau_k$-hinge loss on $D_k$ is close to that on $\tilde{D}_k$. With the help of this result, in Lemmas~\ref{lem:tv-an} and~\ref{lem:tv-bn}, we will show that under the $t$-sparse $\mu_1 \epsilon$-adversarial noise condition and $t$-sparse $\mu_2$-bounded noise condition for sufficiently small $\mu_1$ and $\mu_2$,
the hinge loss of $w$ on $D_k$ is at most a constant away from the hinge loss of $w$ on $\tilde{D}_k$,
for all $w$ in $W_k$.

\begin{lemma}
For any choice of $c_2, c_3 > 0$, there exists a constant
$\ct  > 0$ such that the following holds. For every $k$ in $\{0,1,\ldots,k_0\}$, suppose $\P_{D_k}(y \neq \sign(u \cdot x)) \leq \xi_k$, then for every
$w \in W_k$,
\begin{equation}
| \E_{D_k} \ell_{\tau_k}(w, (x, y)) - \E_{\tilde{D}_k} \ell_{\tau_k}(w, (x, y)) |
\leq
\sqrt{\ct  \xi_k}.
\label{eqn:lossdiff}
\end{equation}
\label{lem:noisy-hinge}
\end{lemma}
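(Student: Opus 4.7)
The two distributions $D_k$ and $\tilde{D}_k$ share the same marginal over $\calX$ (both are $D_X|_{B_k}$), and differ only in the label: under $\tilde{D}_k$ the label is deterministically $\sign(u\cdot x)$, while under $D_k$ the label is drawn from $D_{Y|X}$. The plan is to couple the two expectations pointwise in $x$, observe that the integrand vanishes whenever $y = \sign(u\cdot x)$, and then apply Cauchy--Schwarz to split the remaining integral into (i) a second-moment factor controlled by isotropic log-concavity and (ii) the label-disagreement probability $\xi_k$.

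Concretely, the first step is the pointwise inequality
\[
\bigl|\ell_{\tau_k}(w,(x,+1))-\ell_{\tau_k}(w,(x,-1))\bigr| \leq 1+\frac{|w\cdot x|}{\tau_k},
\]
which follows by case analysis on the sign and magnitude of $w\cdot x/\tau_k$ (each of the two hinge values lies in $[0,\,1+|w\cdot x|/\tau_k]$, and the cases $|w\cdot x|\leq \tau_k$, $w\cdot x>\tau_k$, $w\cdot x<-\tau_k$ each give the bound on the nose). Coupling this with the observation that the integrand is zero on the event $\{y=\sign(u\cdot x)\}$ yields
\[
\bigl|\E_{D_k}\ell_{\tau_k}(w,(x,y))-\E_{\tilde{D}_k}\ell_{\tau_k}(w,(x,y))\bigr|
\leq \E_{D_k}\!\Bigl[\bigl(1+\tfrac{|w\cdot x|}{\tau_k}\bigr) I(y\neq\sign(u\cdot x))\Bigr].
\]

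Next I would apply Cauchy--Schwarz to this last expression, obtaining an upper bound of
\[
\sqrt{\xi_k}\cdot\sqrt{\E_{D_k}\bigl(1+|w\cdot x|/\tau_k\bigr)^2}
\leq \sqrt{\xi_k}\cdot\sqrt{2+2\E_{D_k}(w\cdot x)^2/\tau_k^2}.
\]
It remains to bound $\E_{D_k}(w\cdot x)^2/\tau_k^2$ by a constant depending only on $c_2,c_3$. For $k\geq 1$, any $w\in W_k$ satisfies $\|w-w_{k-1}\|_2\leq r_k=2^{-k-3}$, so by Lemma~\ref{lem:variance} with $b=b_k=c_2 2^{-k}$ and $r=r_k$, $\E_{D_k}(w\cdot x)^2\leq \ce(r_k^2+b_k^2)\leq \ce(1/64+c_2^2)\cdot 2^{-2k}$, and since $\tau_k^2=c_3^2\cdot 2^{-2k}$ the ratio is the constant $\ce(1/64+c_2^2)/c_3^2$. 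For $k=0$ we have $D_0=D$ and $W_0\subseteq\{w:\|w\|_2\leq 1\}$, so by isotropy $\E_D(w\cdot x)^2\leq 1$, while $\tau_0=c_3$, giving the ratio $1/c_3^2$.

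Choosing $\ct$ as $2(1+\max(\ce(1/64+c_2^2)/c_3^2,\,1/c_3^2))$ (or any larger constant) yields Equation~\eqref{eqn:lossdiff}. The only mild obstacle is handling the $k=0$ case uniformly with $k\geq 1$, since $W_0$ is defined differently from the subsequent $W_k$'s; this is resolved simply by invoking isotropy in place of Lemma~\ref{lem:variance}, since both paths give a constant upper bound on the second-moment ratio.
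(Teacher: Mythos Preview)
Your proposal is correct and follows essentially the same approach as the paper: couple $D_k$ and $\tilde D_k$ on the event $\{y\neq\sign(u\cdot x)\}$, bound the pointwise loss difference, apply Cauchy--Schwarz, and then control $\E_{D_k}(w\cdot x)^2/\tau_k^2$ via isotropy for $k=0$ and via Lemma~\ref{lem:variance} for $k\ge 1$. The only cosmetic difference is that the paper uses the pointwise bound $|\ell_{\tau_k}(w,(x,+1))-\ell_{\tau_k}(w,(x,-1))|\leq 2|w\cdot x|/\tau_k$ (so the additive $1$ does not appear and the Cauchy--Schwarz step yields $2\sqrt{\xi_k\,\E_{D_k}(w\cdot x)^2/\tau_k^2}$ directly), whereas you use $1+|w\cdot x|/\tau_k$; both are valid and lead to the same conclusion up to the value of the constant $\ct$.
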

\begin{proof}
We first bound the difference as follows:
\begin{eqnarray}
&& |\E_{D_k} \ell_{\tau_k}(w, (x, y)) - \E_{\tilde{D}_k} \ell_{\tau_k}(w, (x, y)) | \nonumber \\
&=& |\E_{D_k} [\ell_{\tau_k}(w, (x, y)) - \ell_{\tau_k}(w, (x, \sign(u \cdot x)))] | \nonumber \\
&=& | \E_{D_k} I(y \neq \sign(u \cdot x)) \cdot (\ell_{\tau_k}(w, (x, y)) - \ell_{\tau_k}(w, (x, \sign(u \cdot x)))) | \nonumber \\
&\leq& \E_{D_k} I(y \neq \sign(u \cdot x)) \cdot 2\frac{|w \cdot x|}{\tau_k} \nonumber \\
&\leq& 2 \sqrt{ \P_{D_k}(y \neq \sign(u \cdot x)) \frac{\E_{D_k} (w \cdot x)^2 }{\tau_k^2} }
\label{eqn:cs}
\end{eqnarray}
where the first inequality is from that an example $(x,y)$ drawn from $\tilde{D}_k$ satisfies
$y = \sign(u \cdot x)$ with probability 1; the second inequality is by decomposing $1$ as
 $I(y \neq \sign(u \cdot x)) + I(y = \sign(u \cdot x))$; the first inequality is from that
$|(1+\frac{|w \cdot x|}{\tau_k})_+ - (1-\frac{|w \cdot x|}{\tau_k})_+| \leq 2\frac{|w \cdot x|}{\tau_k}$;
the second inequality is from Cauchy-Schwarz. We now consider the cases of $k=0$ and $k \geq 1$ respectively.
\paragraph{Case 1: $k = 0$.} In this case, $W_0$ is a subset of $\{w \in \R^d: \| w \|_2 \leq 1\}$ and $D_0 = D$.
Therefore, for all $w$ in $W_0$,
\[ \E_{D_0} (w \cdot x)^2 \leq 1 \]
as $D$ is isotropic log-concave. Continuing Equation~\eqref{eqn:cs}, we get that
\[ |\E_{D_0} \ell_{\tau_0}(w, (x, y)) - \E_{\tilde{D}_0} \ell_{\tau_0}(w, (x, y)) | \leq  2\sqrt{\frac{\xi_0}{\tau_0^2}} = 2\sqrt{\frac{\xi_0}{c_3^2}}. \]

\paragraph{Case 2: $k \geq 1$.} In this case, $W_k$ is a subset of $\{w \in \R^d: \| w - w_{k-1} \|_2 \leq r_k \}$.
By Lemma~\ref{lem:variance}, and the choices of $b_k$ and $r_k$, we have that for all $w$ in $W_k$,
\begin{equation}
 \E_{D_k} (w \cdot x)^2 \leq \ce (r_k^2 + b_k^2).
\label{eqn:variance}
\end{equation}
By the definitions of $r_k$, $b_k$, and $\tau_k$ and Equation~\eqref{eqn:cs}, we have
\[
|\E_{D_k} \ell_{\tau_k}(w, (x, y)) - \E_{\tilde{D}_k} \ell_{\tau_k}(w, (x, y)) |
\leq
2 \sqrt{\xi_k \frac{\ce(r_k^2 + b_k^2)}{\tau_k^2}} = 2 \sqrt{\xi_k \ce  \frac{1+c_2^2}{c_3^2}}.
\]

Now, choose $\ct = \max\left(\ce(\frac{1+c_2^3}{c_3^2}), \frac{1}{c_3^2}\right)$. Combining the above two cases and by the choice of $\ct $, we conclude that Equation~\eqref{eqn:lossdiff} holds for all $k$ in $\{0,1,\ldots,k_0\}$.
\end{proof}

Applying the above lemma to the two noise settings respectively, we have:

\begin{lemma}
For any $\lambda > 0$ and $c_2, c_3 > 0$, there exists a constant $\mu_1 > 0$ such that the following holds. Suppose $D$ satisfies the $t$-sparse $\mu_1 \epsilon$-bounded
noise condition. For every $k \in \{0,\ldots,k_0\}$, and $w$ in $W_k$,
\[
| \E_{D_k} \ell_{\tau_k}(w, (x, y)) - \E_{\tilde{D}_k} \ell_{\tau_k}(w, (x, y)) |
\leq
\lambda.
\]
\label{lem:tv-an}
\end{lemma}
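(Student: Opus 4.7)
The natural route is to invoke Lemma~\ref{lem:noisy-hinge}, which reduces the problem to upper bounding the conditional noise rate $\xi_k := \P_{D_k}(y \neq \sign(u \cdot x))$. Once $\xi_k \leq \lambda^2/\ct$ is established for every $k \in \{0,\ldots,k_0\}$, Lemma~\ref{lem:noisy-hinge} immediately yields the desired uniform bound of $\lambda$ over $W_k$.

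First, I would convert the global noise bound coming from the adversarial noise assumption into a bound conditioned on the band $B_k$. Since $\P_D(\sign(u\cdot x) \neq y) \leq \mu_1 \epsilon$ by Definition~\ref{def:an}, the elementary inequality $\P(A \mid B) \leq \P(A)/\P(B)$ gives
\[ \xi_k = \P_{D_k}(y \neq \sign(u\cdot x)) \leq \frac{\mu_1 \epsilon}{\P_{D_X}(x \in B_k)}. \]
The key step is therefore a lower bound on $\P_{D_X}(x \in B_k)$ that is uniform over $k \leq k_0$. For $k = 0$ this is trivially $1$. For $k \geq 1$, Lemma~\ref{lem:bandmass} applied to the unit vector $w_{k-1}$ and bandwidth $b_k = c_2 \cdot 2^{-k}$ gives $\P_{D_X}(x \in B_k) \geq \min(\cf/9,\ \cf b_k)$. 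Because $k \leq k_0 = \lceil \log_2 \frac{1}{C_1\epsilon}\rceil$, we have $2^{-k} \geq C_1\epsilon/2$, and hence $b_k \geq c_2 C_1 \epsilon/2$. Consequently
\[ \P_{D_X}(x \in B_k) \geq \min\!\bigl(\cf/9,\ \cf c_2 C_1 \epsilon/2 \bigr), \]
and substituting this back yields $\xi_k \leq \mu_1 \cdot M$ for some constant $M$ depending only on $c_2, C_1, \cf$ (note the convenient cancellation of $\epsilon$).

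Plugging this into Lemma~\ref{lem:noisy-hinge} gives, for every $w \in W_k$,
\[ \bigl| \E_{D_k} \ell_{\tau_k}(w, (x,y)) - \E_{\tilde{D}_k} \ell_{\tau_k}(w, (x,y)) \bigr| \leq \sqrt{\ct \cdot M \cdot \mu_1}. \]
The conclusion follows by choosing $\mu_1 := \mu_1(\lambda) = \lambda^2/(\ct M)$, which is a positive constant depending only on $\lambda, c_2, c_3$ (and the universal constants $C_1, \cf$); taking $\mu_1$ smaller if necessary, we can ensure $\mu_1 \in (0,\tfrac 1 2)$.

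There is no real obstacle here beyond bookkeeping: the only substantive observation is that, although the adversarial noise budget $\mu_1\epsilon$ shrinks with $\epsilon$, the band $B_k$ also has probability mass $\Omega(\epsilon)$ for every epoch $k \leq k_0$, so the ratio stays bounded by a constant multiple of $\mu_1$ uniformly in $k$. This is exactly the reason why the adversarial noise tolerance is allowed to scale with $\epsilon$ rather than being a constant; without the $\epsilon$ factor the argument would fail at the last epoch.
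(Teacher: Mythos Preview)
Your proposal is correct and follows essentially the same approach as the paper's own proof: both reduce to Lemma~\ref{lem:noisy-hinge}, bound $\P_{D_k}(y\neq\sign(u\cdot x))$ via $\P(A\mid B)\leq \P(A)/\P(B)$, lower bound $\P_{D_X}(x\in B_k)$ using Lemma~\ref{lem:bandmass} together with $b_k\geq c_2 C_1\epsilon/2$ for $k\leq k_0$, observe the cancellation of $\epsilon$, and then choose $\mu_1$ small enough that $\sqrt{\ct\,\xi_k}\leq\lambda$. Your closing remark about why the noise budget must scale with $\epsilon$ is a nice addition not spelled out in the paper.
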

\begin{proof}
By Lemma~\ref{lem:noisy-hinge}, it suffices to let $\mu_1$ be such that for all $k \in \{0,1,\ldots,k_0\}$, $\P_{D_k}(y \neq \sign(u \cdot x)) \leq \frac{\lambda^2}{\ct }$ for the $\ct$ defined therein.
Observe that
\[ \P_{D_k}(y \neq \sign(u \cdot x)) \leq \frac{\P_D(y \neq \sign(u \cdot x))}{\P_D(x \in B_k)}. \]
by the fact that $\P(A|B) \leq \frac{\P(A)}{\P(B)}$ for any two events $A$, $B$.
We now consider the cases of $k=0$ and $k \geq 1$ respectively.

\paragraph{Case 1: $k = 0$.} In this case, $B_k = \R^d$, hence $\P_D(x \in B_k) = 1$. It suffices to set $\mu_1 \leq \frac{\lambda^2}{\ct }$.

\paragraph{Case 2: $k \geq 1$.} In this case, $\P_D(x \in B_k) \geq \min(\cf /9, \cf  b_k) \geq \min(\cf /9, \cf  c_2 C_1 \epsilon / 2)$, where the first inequality is from Lemma~\ref{lem:bandmass}; the second inequality is from the definition of $b_k$ and $k \leq k_0$.
Therefore, for sufficiently small $\mu_1$, if $\P_D(y \neq \sign(u \cdot x)) \leq \mu_1 \epsilon$, then $\P_{D_k}(y \neq \sign(u \cdot x)) \leq \frac{2 \mu_1}{\min(2\cf /9, \cf  C_1 c_2)} \leq \frac{\lambda^2}{\ct }$.

Combining the above two cases, we can pick a sufficiently small $\mu_1$ such that the requirements on $\mu_1$ in both cases are satisfied. This completes the proof.
\end{proof}

\begin{lemma}
For any $\lambda > 0$ and $c_2, c_3 > 0$, there exists a constant $\mu_2 > 0$ such that the following holds. Suppose $D$ satisfies the $t$-sparse $\mu_2$-bounded
noise condition. For every $k \in \{0,\ldots,k_0\}$, and $w$ in $W_k$,
\[
| \E_{D_k} \ell_{\tau_k}(w, (x, y)) - \E_{\tilde{D}_k} \ell_{\tau_k}(w, (x, y)) |
\leq
\lambda.
\]
\label{lem:tv-bn}
\end{lemma}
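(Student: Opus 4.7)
The plan is to follow the template of the proof of Lemma~\ref{lem:tv-an}, but exploit the fact that the bounded noise condition gives a per-example noise bound rather than just an average one. As with the adversarial case, I invoke Lemma~\ref{lem:noisy-hinge}: it suffices to find $\mu_2 > 0$ small enough that, for every $k \in \{0,1,\ldots,k_0\}$,
\[
\P_{D_k}(y \neq \sign(u \cdot x)) \leq \frac{\lambda^2}{\ct},
\]
where $\ct = \ct(c_2, c_3)$ is the constant produced by Lemma~\ref{lem:noisy-hinge}. Once this per-epoch noise bound is in place, Lemma~\ref{lem:noisy-hinge} applied to each $w \in W_k$ will yield the desired $\lambda$-closeness of the hinge losses.

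The key observation, which makes this case strictly easier than the adversarial case, is that the bounded noise hypothesis gives a uniform bound on the conditional flipping probability: by Definition~\ref{def:bn}, for every $x \in \calX$, $\P_D(y \neq \sign(u \cdot x) \mid x) \leq \mu_2$. Since $D_k$ is obtained from $D$ by conditioning on the event $\{x \in B_k\}$, which is measurable with respect to $x$, the conditional flipping probability under $D_k$ is still bounded pointwise by $\mu_2$. Formally,
\[
\P_{D_k}(y \neq \sign(u \cdot x)) = \E_{x \sim D_X|_{B_k}}\bigl[\P_D(y \neq \sign(u \cdot x) \mid x)\bigr] \leq \mu_2.
\]
Note that, in contrast to Lemma~\ref{lem:tv-an}, no lower bound on $\P_D(x \in B_k)$ is needed, so the argument is uniform across $k = 0$ and $k \geq 1$ and does not introduce any extra $\epsilon$-dependence.

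Given this, it suffices to choose $\mu_2 \leq \lambda^2/\ct$, which is a positive numerical constant depending only on $c_2, c_3, \lambda$. Substituting into Lemma~\ref{lem:noisy-hinge} immediately yields the claimed bound
\[
\bigl| \E_{D_k} \ell_{\tau_k}(w, (x, y)) - \E_{\tilde{D}_k} \ell_{\tau_k}(w, (x, y)) \bigr| \leq \sqrt{\ct \cdot \tfrac{\lambda^2}{\ct}} = \lambda
\]
for every $w \in W_k$. The only ``obstacle'' is purely bookkeeping, namely checking that the $\mu_2$ chosen here still satisfies $\mu_2 \in (0, \tfrac12)$ as required by Definition~\ref{def:bn}, which is immediate by taking $\mu_2 = \min(\lambda^2/\ct,\, 1/4)$.
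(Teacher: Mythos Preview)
Your proof is correct and follows essentially the same approach as the paper: invoke Lemma~\ref{lem:noisy-hinge}, observe that the per-example bounded-noise condition passes through conditioning on the $x$-measurable event $\{x \in B_k\}$ so that $\P_{D_k}(y \neq \sign(u \cdot x)) \leq \mu_2$, and set $\mu_2 = \lambda^2/\ct$. Your version is slightly more explicit about why conditioning preserves the pointwise noise bound and about keeping $\mu_2 < \tfrac12$, but these are just expository refinements of the same argument.
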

\begin{proof}
By Lemma~\ref{lem:noisy-hinge}, it suffices to let $\mu_2$ be such that for all $k \in \{0,1,\ldots,k_0\}$, $\P_{D_k}(y \neq \sign(u \cdot x)) \leq \frac{\lambda^2}{\ct }$ for the $\ct$ defined therein.
This can indeed be satisfied by setting $\mu_2 = \frac{\lambda^2}{\ct }$, which immediately implies that $\P_{D_k}(y \neq \sign(u \cdot x)) \leq \mu_2 \leq \frac{\lambda^2}{\ct }$.
\end{proof}


\end{document}